\newcommand{\best}[1]{\textbf{#1}}
\newcommand{\sbest}[1]{\underline{#1}}
\providecommand{\sbest}[1]{\underline{#1}}
\newtheorem{theorem}{Theorem}
\newtheorem{corollary}[theorem]{Corollary}
\newtheorem{definition}{Definition}
\title{NeuraLSP: A Neural Spectral Preconditioner for Accelerating PDE Solvers}
\author{%
  Alexander Benanti\thanks{Lead author.} \\
  Department of Applied Mathematics and Statistics\\
  Stony Brook University\\
  Stony Brook, NY 11794 \\
  \texttt{alexander.benanti@stonybrook.edu} \\
   \And
   Xi Han\thanks{Co-first author} \\
  Department of Computer Science\\
  Stony Brook University\\
  Stony Brook, NY 11794 \\
    \texttt{xihan1@cs.stonybrook.edu} \\
  \AND
  Hong Qin\footnotemark[3] \\
  Department of Computer Science \\
  Stony Brook University\\
  Stony Brook, NY 11794 \\
  \texttt{qin@cs.stonybrook.edu} \\
}
\begin{document}

\maketitle

\begin{abstract}
  Solving large-scale sparse linear systems originating from partial differential equations (PDEs) is a fundamental topic in high-performance scientific computing, where preconditioners are crucial. Multigrid methods are among the most effective preconditioners, yet their performance is dictated by the accurate construction of grid transfer operators. Current neural multigrid methods learn such operators with graph neural networks (GNNs), typically by extracting connectivity from discretized system matrices. While effective, these graph-based constructions suffer from rank inflation, resulting in unnecessarily large coarse spaces and slower convergence. To ameliorate, this paper advocates NeuraLSP, a new neural multigrid preconditioner that replaces graph aggregation with a fixed low-rank spectral representation derived from the left singular subspace of near-nullspace components. At the network design level, NeuraLSP is trained with a novel subspace loss function, which preserves the error modes most relevant to multigrid convergence while suppressing rank inflation. This paper's grand innovation hinges upon both theoretical guarantees and empirical robustness to rank inflation, affording up to a 53\% speedup over SOTA neural preconditioners across a variety of PDE families. Code is available at \href{https://github.com/alexbenanti/NeuraLSPv2}{https://github.com/alexbenanti/NeuraLSPv2}.
\end{abstract}

\section{Introduction and Motivation}

\textbf{Background and Key Challenges.} \quad
Partial differential equations (PDEs) are of fundamental significance to computational science and engineering, where they are ubiquitous in simulation, modeling, and scientific computing. After discretization, many PDE problems require solving large sparse linear systems of the form $\mathbf{A} \mathbf{u} = \mathbf{f}$. The cost of these linear solves is often a dominant computational bottleneck, particularly for high-resolution discretizations and repeated solves across parameterized problem families. 

Algebraic multigrid (AMG) methods are among the most widely used techniques for accelerating such systems. AMG combines relaxation, which damps high-frequency error components, with coarse-grid correction, which targets smooth error components associated with the near-nullspace of the system matrix \cite{Stuben2001, Falgout2006, RugeStuben87, Xu2017, Brandt1977, Briggs00, Trottenberg2001}. The effectiveness of AMG, therefore, depends critically on the construction of the prolongation operator, which transfers coarse-level corrections back to the fine grid. Classical AMG methods typically construct this operator using local strength-of-connection heuristics derived from the matrix graph. While effective in many settings, these heuristics can lead to dense coarse operators, especially when the near-nullspace structure is complex. This phenomenon increases operator complexity, setup cost, and solve time.

Recent neural approaches seek to improve multigrid preconditioning by learning prolongation operators or related solver components from data. Graph neural networks (GNNs) are a natural choice because sparse system matrices induce graph structures. However, many neural multigrid methods still rely on graph-based aggregation of the discretized operator, which can inherit the same coarse-space inflation issues as classical constructions. Moreover, existing learned solvers often lack explicit spectral control over the error modes that are most important for multigrid convergence. This motivates a central question: \textbf{\textit{Can a learned preconditioner construct a smaller coarse representation while preserving the near-nullspace information required for fast convergence?}}

\textbf{Motivation and Method Overview.} \quad 
We propose \textbf{NeuraLSP}, a neural multigrid preconditioning framework designed to address this question. Instead of learning prolongation solely from graph connectivity, NeuraLSP learns a fixed low-rank operator informed by the left singular subspace of the near-nullspace components of the system matrix. The resulting representation compresses the spectral information needed for coarse-grid correction while explicitly controlling the rank of the learned coarse space.

The key idea behind NeuraLSP is that \textbf{\textit{rank inflation}} arises when the learned or constructed coarse representation contains more degrees of freedom than are needed to capture the dominant smooth error modes. We use the term ``rank inflation'' to refer to the growth of the coarse representation beyond the dimension needed to capture the dominant smooth error modes, which leads to increased setup cost, excessive operator complexity, and long solve time. To avoid this, we introduce a neural low-rank preconditioner, described in Algorithm~\ref{alg:neuralsp}, and train it using a Nested Left-Singular-Subspace (NLSS) loss. This objective encourages the learned operator to recover the relevant singular subspace while maintaining a compact coarse representation. The full pipeline is illustrated in Fig.~\ref{fig:pipeline_detailed}.

NeuraLSP combines data-driven learning with explicit spectral structure. This design yields two complementary benefits. First, the learned representation is constrained to remain low rank, making the method robust to rank inflation. Second, the loss function directly targets the subspace associated with near-nullspace error components, aligning the learning objective with the mechanism that drives multigrid convergence. Our theoretical analysis establishes guarantees for the proposed objective, and our experiments show that these guarantees translate into improved efficiency and robustness across multiple PDE families.

\textbf{Major Contributions.} \quad
The salient contributions of this paper comprise: 
(1) \textbf{\textit{A neural low-rank multigrid preconditioner.}}
    We introduce \textbf{NeuraLSP}, a neural preconditioning framework that constructs fixed low-rank transfer operators using learned left singular subspaces. Unlike graph-aggregation-based prolongation learning, NeuraLSP explicitly controls the dimension of the coarse representation, improving robustness to rank inflation;
(2) \textbf{\textit{A left-singular-subspace training objective.}}
    We propose the \textbf{NLSS loss}, a nested subspace loss designed to recover the leading left singular subspace of the near-nullspace components of the system matrix. This objective provides a principled way to preserve the error modes most relevant to multigrid convergence while reducing coarse-space complexity; and
(3) \textbf{\textit{Theoretical and empirical validation.}}
    We prove theoretical guarantees for the proposed objective and evaluate NeuraLSP across diverse PDE benchmarks. Our experiments demonstrate improved solve-time efficiency, strong generalization across PDE variants, and robustness to coarse-space rank inflation compared with classical and neural preconditioning baselines.
To the best of our knowledge, NeuraLSP is \textbf{the first} neural multigrid preconditioning framework to use a learned left singular subspace to explicitly control rank inflation in the coarse representation.

\section{Related Work}

\textbf{Classical AMG and Near-Nullspace-Based Coarsening.} \quad
Multigrid methods accelerate the solution of large sparse linear systems by combining relaxation with coarse-grid correction. In algebraic multigrid (AMG), the central object is the prolongation operator $\mathbf{P}$, which maps coarse-level corrections to the fine level. Classical AMG methods construct $\mathbf{P}$ using local algebraic information, most commonly strength-of-connection measures derived from the matrix graph \cite{Stuben2001,Falgout2006,RugeStuben87,Xu2017}. These methods are effective for many unstructured problems, but their performance depends critically on whether the coarse space can represent algebraically smooth error modes. When the prolongation space is too large or poorly structured, the resulting hierarchy can suffer from high operator complexity, increased setup cost, and slower matrix-vector operations.

Smoothed-aggregation AMG (SA-AMG) addresses this issue by constructing tentative prolongation operators from near-nullspace candidate vectors and then smoothing them to improve approximation quality \cite{Vanek1996}. Beyond SA-AMG, \cite{Chow06} proposed using local singular value decompositions of smooth-error aggregates to construct multilevel interpolation. These methods demonstrate the value of spectral information for multigrid coarsening, but they rely on explicit SVD computations and are not designed as amortized learned preconditioners.

\textbf{Learning Spectral Subspaces with Neural Networks.} \quad
Several works have studied differentiable eigendecomposition and SVD-based layers or objectives for neural networks \cite{Wang2019,Levinson2020}. Other methods use spectral constraints or spectral normalization to stabilize learning and control operator behavior \cite{Miyato2018}. More recently, neural methods have been developed to learn spectral information of operators directly, including approaches for learning leading eigenfunctions, singular functions, or principal components \cite{Pfau2018,Shaham2018,Deng2022,Bao2020,Oftadeh2020,Gemp2021,Li2024,Ryu24}. Randomized SVD and related sketching methods further reduce the cost of approximating dominant spectral subspaces \cite{Halko2011}. However, these spectral learning methods are generally not designed to construct multigrid transfer operators, and they do not directly address the rank inflation that can occur in coarse-space construction for AMG.

\textbf{Learning Multigrid Transfer Operators.} \quad
Recent work has explored neural approaches for learning multigrid solvers and prolongation operators. Greenfeld et al. \cite{Greenfeld19} proposed learning a mapping from structured families of discretized PDEs to prolongation operators, demonstrating that neural networks can amortize multigrid design across parameterized problem classes. Luz et al. \cite{Luz20} extended this idea to algebraic multigrid by using graph neural networks to learn prolongation operators for sparse symmetric positive semidefinite systems. However, these approaches typically rely on graph-based representations of the system matrix. As a result, learned prolongation operators can still produce large or overparameterized coarse spaces for difficult sparse systems, limiting the efficiency gains obtained from learning. NeuraLSP addresses this gap by combining neural learning with an explicit low-rank spectral objective. Instead of learning prolongation solely through graph aggregation, NeuraLSP learns a low-rank representation of the left singular subspace associated with near-nullspace components of the system matrix $\mathbf{A}$. 

\textbf{SVD-Inspired Coarsening in Scientific Computing Applications.} \quad 
SVD-based near-nullspace compression has also appeared in domain-specific multigrid applications. For example, Whyte et al. \cite{Whyte2025} use SVD truncation of smoothed test vectors to construct prolongation and restriction matrices for multigrid preconditioning of Wilson fermions in lattice QCD. Related multigrid ideas have also been developed for structural mechanics and elasticity \cite{Vanek1999,Griebel2003}, incompressible flow and Navier--Stokes systems \cite{Knoll2000,Pernice2001,Elman2003}, and electromagnetics and Maxwell equations \cite{Reitzinger2002,Bochev2003,Hiptmair2007}. These applications further motivate the need for scalable, spectrally informed coarse-space construction.

\section{Mathematical Preliminaries}

\textbf{PDE Discretization and Linear Systems.} \quad 
Let $\Omega \subset \mathbb{R}^d$ be a bounded domain with boundary $\partial \Omega$. We consider a boundary-value problem in the form of 
\begin{equation}
    \label{generalized_pde}
    \left\{
    \begin{aligned}
        \mathcal{D}[u(\mathbf{x})] &= f(\mathbf{x}), 
        \quad && \mathbf{x}\in\Omega,\\
        u(\mathbf{x}) &=0, 
        \quad && \mathbf{x}\in\partial\Omega,
    \end{aligned}
    \right.
\end{equation}
where $\mathcal{D}$ is a differential operator and $f$ is a prescribed source term. In many scientific-computing applications, analytical solutions are unavailable, and the PDE must be solved numerically. After discretization, the problem is typically reduced to a sparse linear system
\begin{equation}
    \label{discretized_pde}
    \mathbf{A}\mathbf{u} = \mathbf{f},
\end{equation}
where $\mathbf{A}\in\mathbb{R}^{n\times n}$ is the discretized operator and 
$\mathbf{u},\mathbf{f}\in\mathbb{R}^n$ are the discrete solution and forcing vectors. The system matrix $\mathbf{A}$ depends on the underlying discretization scheme. Finite-difference methods are commonly used on structured grids \cite{Thomas1995}, finite-volume methods are often used for conservation-law formulations \cite{Leveque1992}, and finite-element methods are widely used for complex geometries and irregular meshes \cite{BrennerScott2008}.

In this work, we focus on the efficient solution of linear systems of the form Eq.~\eqref{discretized_pde}, where $\mathbf{A}$ is large and sparse. Such systems are often solved iteratively, and their convergence can depend strongly on the conditioning and spectral structure of $\mathbf{A}$. Preconditioning is therefore essential for reducing the number of iterations and improving overall solve time.

\textbf{Algebraic Multigrid and Smoothed Aggregation.} \quad 
Algebraic multigrid (AMG) methods construct a multilevel hierarchy directly from the matrix $\mathbf{A}$, without requiring explicit geometric information about the underlying grid or mesh. The matrix can be interpreted as a graph whose nodes correspond to unknowns and whose edges correspond to nonzero matrix entries. In classical AMG, this graph is used to identify strongly connected variables and to construct coarse levels that represent slowly varying error components.

The centerpiece in AMG is the prolongation operator $\mathbf{P}$, which maps coarse-level corrections to the fine level. Given a restriction operator $\mathbf{R}$, the Galerkin coarse operator is defined as: $\mathbf{A}_c = \mathbf{R}\mathbf{A}\mathbf{P}$. For symmetric positive definite problems, a common choice is $\mathbf{R}=\mathbf{P}^{\top}$. In a two-grid method, relaxation first reduces high-frequency error components, while coarse-grid correction targets algebraically smooth error components that relaxation alone removes slowly. Thus, the effectiveness of AMG depends on whether the range of $\mathbf{P}$ accurately captures these smooth error modes.

Smoothed aggregation AMG (SA-AMG) constructs $\mathbf{P}$ by first grouping fine-grid variables into aggregates and then building a tentative prolongation operator $\mathbf{T}$ from candidate near-nullspace vectors \cite{Vanek1996}. These candidate vectors encode error components that should be preserved on the coarse level. The tentative operator is then smoothed, commonly using one step of weighted Jacobi: $\mathbf{P}=\left(\mathbf{I} - \omega\mathbf{D}^{-1}\mathbf{A}\right)\mathbf{T}$, where $\mathbf{D}=\text{diag}(\mathbf{A})$ and $\omega$ is a relaxation parameter. This smoothing step reduces the energy of the tentative basis functions and allows the influence of each coarse variable to spread beyond its aggregate. The resulting prolongation operator is better aligned with the algebraically smooth components of the error.

\textbf{Near-Nullspace Vectors.} \quad
The near-nullspace of $\mathbf{A}$ consists of vectors that are difficult for relaxation to eliminate. For symmetric positive definite systems, these vectors can be characterized as low-energy modes, i.e., vectors $\mathbf{s}$ for which the Rayleigh quotient, $\left(\mathbf{s}^{\top}\mathbf{A}\mathbf{s}\right)/
    \left(\mathbf{s}^{\top}\mathbf{s}\right)$, is small. In multigrid terminology, such vectors are often called algebraically smooth error modes. Following ideas from bootstrap and adaptive AMG \cite{Brandt11}, near-nullspace samples can be generated by applying relaxation to random initial vectors. Let 
$\mathbf{S}^{(0)}\in\mathbb{R}^{n\times K}$ be a collection of random test vectors. Applying $s_1$ steps of weighted Jacobi gives $\mathbf{S}^{(s_1)}=\left(\mathbf{I}-\omega \mathbf{D}^{-1}\mathbf{A}\right)^{s_1}\mathbf{S}^{(0)}$.
Relaxation damps high-energy components of the random vectors, so the columns of 
$\mathbf{S}^{(s_1)}$ become enriched in smooth-error components. We denote the resulting smoothed-vector matrix by $\mathbf{S} \equiv \mathbf{S}^{(s_1)}$. The goal of the prolongation operator $\mathbf{P}$ is then to represent the dominant subspace of $\mathbf{S}$, so that coarse-grid correction can efficiently reduce the corresponding smooth errors.

\textbf{SVD-Centric Construction of Coarse Bases.} \quad
The smoothed-vector matrix $\mathbf{S}$ contains information about the near-nullspace components that should be represented on the coarse level. A natural way to extract this information is through the singular value decomposition $\mathbf{S} = \mathbf{U}\mathbf{\Sigma}\mathbf{V}^{\top}$. If we seek a rank-$k$ representation of $\mathbf{S}$, we can formulate the approximation problem: $\min_{\mathbf{P},\mathbf{W}}
    \left\|
        \mathbf{S} - \mathbf{P}\mathbf{W}
    \right\|_2,
    \quad
    \text{s.t.}
    \quad
    \mathbf{P}\in\mathbb{R}^{n\times k},\;
    \mathbf{P}^{\top}\mathbf{P}=\mathbf{I}_k,\;
    \mathbf{W}\in\mathbb{R}^{k\times K}$.
For fixed $\mathbf{P}$, the optimal coefficient matrix is 
$\mathbf{W}=\mathbf{P}^{\top}\mathbf{S}$. Therefore, the problem is equivalent to finding a $k$-dimensional subspace that best approximates the columns of $\mathbf{S}$. By the Eckart--Young--Mirsky theorem \cite{Eckart1936}, the optimal rank-$k$ approximation is $\mathbf{S}_k
    =
    \mathbf{U}_k
    \mathbf{\Sigma}_k
    \mathbf{V}_k^{\top}$,
where $\mathbf{U}_k$ contains the leading $k$ left singular vectors of $\mathbf{S}$. Thus, one optimal choice is
    $\mathbf{P} = \mathbf{U}_k,
    \quad
    \mathbf{W} = \mathbf{\Sigma}_k\mathbf{V}_k^{\top}$.
This factorization is not unique: for any orthogonal matrix 
$\mathbf{Q}\in\mathbb{R}^{k\times k}$, the pair $\mathbf{P} = \mathbf{U}_k\mathbf{Q},
    \quad
    \mathbf{W} = \mathbf{Q}^{\top}\mathbf{\Sigma}_k\mathbf{V}_k^{\top}$
achieves the same approximation. Therefore, only the subspace $\text{span}(\mathbf{P})$ is uniquely determined.

This SVD perspective connects near-nullspace recovery with low-rank subspace approximation. \cite{Chow06} uses this idea locally by computing SVDs of smoothed-error samples restricted to aggregates, which keeps the decomposition cost manageable. However, direct SVD computations can be expensive, and global singular vectors may be dense. This motivates our approach: instead of computing a per-instance SVD of the smoothed-vector matrix $\mathbf{S}$, we learn a map that produces an orthonormal low-rank coarse basis while preserving the key subspace-recovery property at global optima.

\section{Novel Approach}
\label{sec:method}

Consider a discretized PDE (e.g., Eq.~\eqref{discretized_pde}), we now focus on symmetric positive definite systems, for which Galerkin coarse-grid correction and preconditioned conjugate gradient methods are naturally applicable. NeuraLSP constructs a rank-controlled neural multigrid preconditioner from smoothed near-nullspace samples. The method consists of three steps: first, we generate a matrix of smoothed test vectors $\mathbf{S}$; second, a neural model predicts a low-rank basis for the dominant left singular subspace of $\mathbf{S}$; and third, this basis is used to define a Galerkin coarse correction inside a two-level preconditioner.

Unlike classical AMG, where the coarse-space dimension is determined by aggregation and strength-of-connection heuristics, NeuraLSP directly controls the coarse rank through a user-specified parameter $k$. This allows the method to retain the spectral information most relevant to smooth-error correction while avoiding unnecessarily large coarse representations.


\textbf{Novel Nested Left-Singular-Subspace (NLSS) Loss.} \quad 
Let $\mathbf{S}\in\mathbb{R}^{n\times K}$ denote a matrix of smoothed test vectors generated from random initial vectors by applying relaxation to the system matrix $\mathbf{A}$. The columns of $\mathbf{S}$ are enriched in algebraically smooth error components, and therefore provide samples of the near-nullspace modes that the coarse space should represent.

A natural objective for learning a $k$-dimensional subspace is to maximize the projected energy
\begin{equation}
    \label{eq:subspace_energy}
    J(\mathbf{P})
    =
    \left\|
        \mathbf{P}\mathbf{P}^{\top}\mathbf{S}
    \right\|_F^2
    =
    \operatorname{Tr}
    \left(
        \mathbf{P}^{\top}
        \mathbf{S}\mathbf{S}^{\top}
        \mathbf{P}
    \right),
    \qquad
    \mathbf{P}\in\operatorname{St}(n,k),
\end{equation}
where $\operatorname{St}(n,k)$ denotes the Stiefel manifold of $n\times k$ matrices with orthonormal columns. Maximizing Eq.~\eqref{eq:subspace_energy} recovers the leading $k$-dimensional left singular subspace of $\mathbf{S}$. However, this objective is invariant under transformations of the form $\mathbf{P}\mapsto \mathbf{P}\mathbf{Q}$ for any orthogonal $\mathbf{Q}\in\mathbb{R}^{k\times k}$. This invariance is appropriate when only the full rank-$k$ subspace is needed, but it does not impose any ordering on the learned basis. Consequently, the first $\ell<k$ columns of $\mathbf{P}$ need not span the best rank-$\ell$ subspace.

To obtain a rank-adaptive basis whose prefixes are spectrally meaningful, we introduce the \textbf{Nested Left-Singular-Subspace} loss, abbreviated as \textbf{NLSS}. Given a neural prediction $\widehat{\mathbf{P}}_\theta\in\mathbb{R}^{n\times k}$, we first compute an orthonormal basis, $\mathbf{P}_\theta= \operatorname{qf}\left(\widehat{\mathbf{P}}_\theta\right)$,
where $\operatorname{qf}(\cdot)$ denotes the thin-$Q$ factor from a QR decomposition. Let $\mathbf{P}_{\theta,1:\ell}$ denote the first $\ell$ columns of $\mathbf{P}_\theta$. We define
\begin{equation}
    \label{eq:nlss_loss}
    \mathcal{L}^{(k)}_{\mathrm{NLSS}}
    \left(
        \mathbf{S},
        \mathbf{P}_\theta
    \right)
    =
    \frac{1}{k}
    \sum_{\ell=1}^{k}
    \left(
        1
        -
        \frac{
            \left\|
                \mathbf{P}_{\theta,1:\ell}^{\top}
                \mathbf{S}
            \right\|_F^2
        }{
            \left\|
                \mathbf{S}
            \right\|_F^2
        }
    \right).
\end{equation}
Equivalently, if $\mathbf{P}_\theta=[\mathbf{p}_1,\ldots,\mathbf{p}_k]$, then minimizing Eq.~\eqref{eq:nlss_loss} is equivalent to maximizing:
$
    \label{eq:weighted_kyfan}
    \sum_{j=1}^{k}
    (k-j+1)
    \mathbf{p}_j^{\top}
    \mathbf{S}\mathbf{S}^{\top}
    \mathbf{p}_j$.
Thus, earlier columns receive a larger weight, encouraging the learned basis to recover the dominant singular directions in order. This nested structure allows the rank of the preconditioner to be reduced by truncating the learned basis without retraining the model. Each $\mathbf{S}_j$ is generated from a training instance. Importantly, this objective does not require ground-truth singular vectors. The loss is self-supervised: it uses only the smoothed near-nullspace samples.

The following theorem and corollary (proved in the appendices \ref{sec:theorem_proof} and \ref{sec:proof_corollary} respectively) show that NLSS recovers the dominant left singular subspace at global optima and that the nested weighting recovers an ordered basis when the leading singular values are simple. These results justify using the first $k$ learned columns as a rank-controlled coarse representation.

\begin{theorem}[Global minimizers of the NLSS objective]
\label{thm:nlss_subspace}
Let $\mathbf{S}\in\mathbb{R}^{n\times K}$ have singular value decomposition $\mathbf{S}=\mathbf{U}\mathbf{\Sigma}\mathbf{V}^{\top}$,
with singular values $\sigma_1\geq \sigma_2\geq \cdots$. Let
$\mathbf{P}\in\operatorname{St}(n,k)$.
Every global minimizer of
$\mathcal{L}^{(k)}_{\mathrm{NLSS}}(\mathbf{S},\mathbf{P})$
spans a dominant $k$-dimensional left singular subspace of $\mathbf{S}$. In particular, if
$\sigma_k>\sigma_{k+1}$, then every global minimizer satisfies $\operatorname{Range}(\mathbf{P})=\operatorname{Range}(\mathbf{U}_k)$,
where $\mathbf{U}_k$ contains the leading $k$ left singular vectors of $\mathbf{S}$.
\end{theorem}

\begin{corollary}[Ordered recovery under simple singular values]
\label{cor:nlss_ordered}
Under the assumptions of Theorem~\ref{thm:nlss_subspace}, suppose further that $\sigma_1>\sigma_2>\cdots>\sigma_k>\sigma_{k+1}$. Then any global minimizer of
$\mathcal{L}^{(k)}_{\mathrm{NLSS}}$
recovers the leading left singular vectors in order, up to column-wise signs: $\mathbf{P}=\mathbf{U}_k\mathbf{D}$,
where $\mathbf{D}$ is a diagonal matrix with entries in $\{\pm 1\}$.
\end{corollary}


\textbf{Neural Parameterization and Preconditioner Application.} \quad
The neural model receives the smoothed test-vector matrix $\mathbf{S}$ and outputs an unconstrained matrix
$
    \widehat{\mathbf{P}}_\theta
    =
    f_\theta(\mathbf{S})
    \in\mathbb{R}^{n\times k}.
$
Because the theoretical results assume an orthonormal basis, we apply a thin QR decomposition:
$
    \widehat{\mathbf{P}}_\theta
    =
    \mathbf{Q}
    \mathbf{R},
    \ 
    \mathbf{P}_\theta
    =
    \mathbf{Q}.
$
This QR layer ensures that
$\mathbf{P}_\theta^{\top}\mathbf{P}_\theta=\mathbf{I}_k$.
In our implementation, $f_\theta$ is a four-layer MLP with hidden widths
$128$--$256$--$256$--$128$, LayerNorm, and GELU activations. The output is reshaped into an $n\times k$ matrix before orthonormalization.

\begin{figure}[h!]
\includegraphics[clip, width=0.95\textwidth]{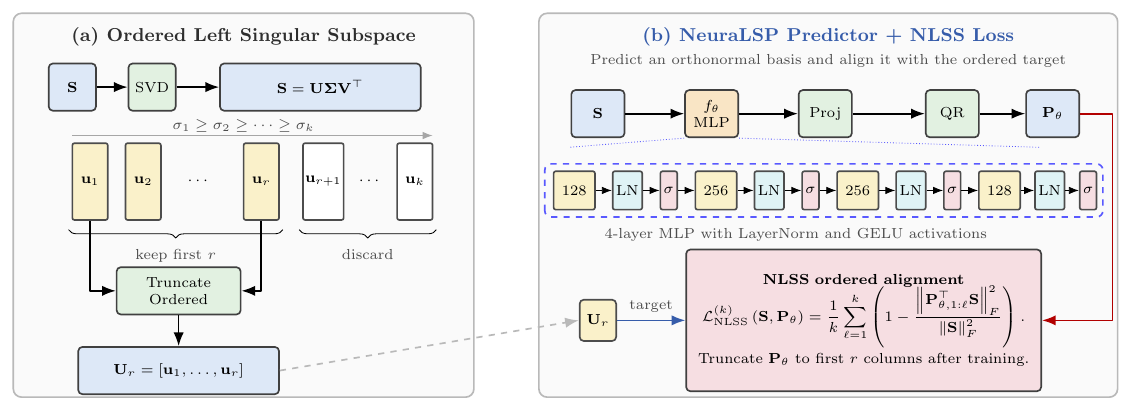}
\caption{Our NeuraLSP pipeline and architecture overview. The collection of vectors $\mathbf{S}$ is passed through a neural network consisting of a 4-layer MLP (widths 128-256-256-128) with LayerNorms and GELU activations in-between. Then, the output is projected onto a matrix, and QR-decomposed to ensure orthonormality. The result is the left singular subspace $\tilde{\mathbf{U}}$.} 
    \label{fig:pipeline_detailed}
\end{figure}

At inference time, NeuraLSP does not compute an SVD. Instead, it generates smoothed test vectors, applies the trained model, orthonormalizes the result, and forms the coarse operator. The setup phase is performed once for a given matrix $\mathbf{A}$, after which the resulting two-level operator is used as a preconditioner inside PCG. We illustrate how the NeuraLSP preconditioner works in Fig.~\ref{fig:rank_truncation} and how the size of the coarse space can be tuned. More practically, we provide the high-level NeuraLSP algorithm in Alg.~\ref{alg:neuralsp}. 

\begin{figure*}[h!]
    \centering
    \includegraphics[width=\textwidth]{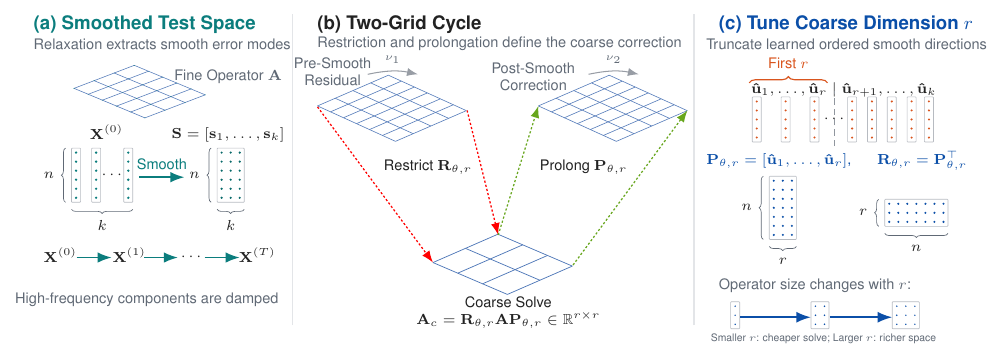}
    \caption{
    Overview of the NeuraLSP two-grid construction. Smoothed test vectors are used
    to learn ordered smooth directions. Truncating the first $r$ directions forms
    the tunable prolongation $\mathbf{P}_{\theta,r}$ and restriction
    $\mathbf{R}_{\theta,r}$, which define the coarse correction in the two-grid
    cycle.
    }
    \label{fig:rank_truncation}
\end{figure*}

\begin{algorithm}[h!]
  \caption{NeuraLSP setup and one preconditioner application}
  \label{alg:neuralsp}
  \begin{algorithmic}[1]
    \STATE {\bfseries Input:} SPD matrix $\mathbf{A}\in\mathbb{R}^{n\times n}$, trained model $f_\theta$, random test vectors $\mathbf{S}^{(0)}\in\mathbb{R}^{n\times K}$, rank $k$, smoothing parameter $\omega$, test-vector smoothing steps $s_1$, pre/post-smoothing steps $\nu_1,\nu_2$, coarse rank $r\leq k$.
    \STATE {\bfseries Setup:}
    \STATE $\mathbf{D}\leftarrow \operatorname{diag}(\mathbf{A})$
    \STATE $\mathbf{S}\leftarrow
    \left(
        \mathbf{I}
        -
        \omega
        \mathbf{D}^{-1}
        \mathbf{A}
    \right)^{s_1}
    \mathbf{S}^{(0)}$
    \STATE $\widehat{\mathbf{P}}_\theta\leftarrow f_\theta(\mathbf{S})$
    \STATE $\mathbf{P}_\theta\leftarrow \operatorname{qf}(\widehat{\mathbf{P}}_\theta)$
    \STATE $\mathbf{P}_{\theta,r} = \mathbf{P}_{\theta}[:,1:r]$
    \STATE $\mathbf{A}_c\leftarrow
    \mathbf{P}_{\theta,r}^{\top}
    \mathbf{A}
    \mathbf{P}_{\theta,r}$
    \STATE {\bfseries Apply preconditioner to residual $\mathbf{b}$:}
    \STATE $\mathbf{z}\leftarrow \mathbf{0}$
    \STATE Apply $\nu_1$ relaxation sweeps to approximately solve
    $\mathbf{A}\mathbf{z}=\mathbf{b}$
    \STATE $\boldsymbol{\rho}\leftarrow \mathbf{b}-\mathbf{A}\mathbf{z}$
    \STATE Solve
    $\mathbf{A}_c\mathbf{e}_c
    =
    \mathbf{P}_{\theta,r}^{\top}\boldsymbol{\rho}$
    \STATE $\mathbf{z}\leftarrow
    \mathbf{z}
    +
    \mathbf{P}_{\theta,r}\mathbf{e}_c$
    \STATE Apply $\nu_2$ post-smoothing relaxation sweeps to
    $\mathbf{A}\mathbf{z}=\mathbf{b}$, initialized at the current $\mathbf{z}$
    \STATE {\bfseries Return:} $\mathbf{z}\approx \mathbf{A}^{-1}\mathbf{b}$
  \end{algorithmic}
\end{algorithm}

When used with PCG, the two-level preconditioner should be implemented in an SPD-compatible form. In our experiments, we use the Galerkin restriction $\mathbf{R} =\mathbf{P}_\theta^{\top}$, which is the standard choice for SPD systems. For nonsymmetric systems, the same learned coarse-space construction can be combined with Krylov methods such as GMRES, but the convergence theory and symmetry requirements differ.

\section{Experiments}
\label{sec:experiments}

\subsection{Experimental Setup}

\subsubsection{Benchmarks}

We evaluate NLSS on five PDE-derived linear systems: diffusion, anisotropic
diffusion, screened Poisson, heat, and wave equations. Each problem is posed on
the bounded domain $\Omega=[0,1]^2$ and discretized using finite elements on
triangulated meshes. Unless otherwise stated, we use grids with $N=64$, giving
linear systems of dimension $n=(N+1)^2$, and collect $K=72$ smoothed test
vectors per instance. PDEs are further evaluated in the following two aspects: 

\textbf{Subspace-ordering Diagnostic.} \quad 
We first test whether NLSS learns an ordered left subspace. For this diagnostic,
we compare NLSS against the invariant subspace loss in Eq.~\eqref{eq:subspace_energy}.
Both models are trained to the maximum available rank using $K=32$ smoothed
test vectors on smaller systems with $N=9$. We then truncate the learned basis
to its first $r$ columns and measure the fraction of smoothed-vector energy
captured:
$
    \mathrm{Energy}(r)
    =
    \frac{\|\mathbf{Q}_{\theta,r}^\top\mathbf{S}\|_F^2}
    {\|\mathbf{S}\|_F^2},
$
where $\mathbf{Q}_{\theta,r}$ denotes the first $r$ learned basis vectors and
$\mathbf{S}\in\mathbb{R}^{n\times K}$ is the smoothed test-vector matrix.
Since the maximum rank is $K=32$, both losses can in principle recover the same
subspace at full rank. The goal of this diagnostic is to test whether NLSS
concentrates the most informative smooth directions in the earliest columns.

\textbf{Two-grid Solver Evaluation.} \quad
We next evaluate the learned subspaces inside a two-grid preconditioner for
PCG~\cite{Hestenes52}. Given an ordered learned basis, we form the prolongation
operator by truncating after the first $r$ directions,
$\mathbf{P}_r\in\mathbb{R}^{n\times r}$, and use the corresponding restriction
operator $\mathbf{R}_r=\mathbf{P}_r^T$. This makes the coarse dimension directly
tunable through $r$. For all methods, we use the same PCG tolerance
$\delta=10^{-6}$ and the same weighted Jacobi smoother with
$\nu_1=5$, $\nu_2=5$, and $\omega=0.66$.

\subsubsection{Baselines and Metrics}

\textbf{Baselines.} \quad
We compare NeuraLSP against both learned and classical baselines. Learned baselines
include a GNN trained with the loss of \cite{Luz20}, NeurKITT~\cite{Luo24}, Greenfeld et al.~\cite{Greenfeld19}, and an MLP trained on the classical subspace metric Eq.~\eqref{eq:subspace_energy}. The GNN is parameter-matched to our MLP, and is trained to the same maximum rank before truncation. Classical
baselines include SOR, incomplete Cholesky (ICC), SVD, RandomSVD, and smoothed
aggregation AMG using PyAMG~\cite{Bell2022}. Whenever applicable, all methods are evaluated at the same coarse dimension $n_c=r$, use the same two-grid solver, smoother, stopping tolerance, and use $K=72$ as the initial number of smoothed test vectors.

\textbf{Metrics.} \quad
We report solve time and total per-instance runtime in milliseconds. Solve time
measures the time spent in the PCG solve after the preconditioner has been
constructed. Total runtime includes all method-specific inference, setup,
collection of smoothed vectors when required, construction of the preconditioner,
and the complete PCG process until convergence. For total runtime, we report the
median. The interquartile range of runtimes [Q1, Q3] over the test set is reported in the appendix in Table \ref{tab:pde_runtime_quartiles}.
\subsection{Quantitative Results}
\label{sec:quant_results}

\textbf{Overview.} \quad 
We first report \textbf{NeuraLSP}'s efficiency gain on PDE solving using NLSS on multiple benchmarks as mentioned above. We then analyze the captured energy gap between NLSS and the subspace loss in Eq.~\eqref{eq:subspace_energy} as we reduce the rank $r$, also mentioned above.  For experimental completeness, we also report the scalability data up to $N=80$. We also conducted an ablation study on different coarse space sizes (i.e., different chosen ranks for our prolongation matrix). More detailed data are available in the Appendix. 

\textbf{PDE Solving Efficiency.} \quad 
Table~\ref{tab:pde_runtime} shows that we outspeed a wide variety of neural and classical baselines in terms of both solve time and total time across elliptic, parabolic, and hyperbolic PDEs. Notably, for the anisotropic equation, we can see that we improve the solve time by about 53\% and total end-to-end time by about 19\%. Similarly, for the wave equation, we improve the solve time by about 46\% and the total end-to-end time by about 15\%. Finally, for the heat equation, we see a solve time improvement of about 23\% and a total end-to-end improvement of about 17\%. Thus, we can experimentally see that amortizing the cost of SVD yields the benefits of a reduced coarse space for solving different families of PDEs, from the spectral properties of the smoothed vectors, without the computational overhead of SVD. Additionally, we compare iteration counts of NLSS vs. SVD in the appendix. 

\begin{table}[h!]
  \centering
  \caption{Runtime comparison (ms) of PDE solvers. The best performing method is \textbf{bolded}, and the second-best is \underline{underlined}. The improvement row reports the percent reduction of \textbf{NeuraLSP (Ours)} relative to the second-best method in each column.}
  \label{tab:pde_runtime}
  \resizebox{\textwidth}{!}{
  \begin{tabular}{lrrrrrrrrrr}
    \toprule
    & \multicolumn{2}{c}{Diffusion} & \multicolumn{2}{c}{Anisotropic} & \multicolumn{2}{c}{Screened Poisson} & \multicolumn{2}{c}{Heat} & \multicolumn{2}{c}{Wave} \\
    \cmidrule(lr){2-3} \cmidrule(lr){4-5} \cmidrule(lr){6-7} \cmidrule(lr){8-9} \cmidrule(lr){10-11}
    Method & Solve & Total & Solve & Total & Solve & Total & Solve & Total & Solve & Total \\
    \midrule
    Subspace & 73.24 & 139.03 & \underline{52.52} & 115.80 & 49.98 & 108.24 & 45.54 & 90.36 & \underline{61.36} & 111.11 \\
    GNN & 94.02 & 835.46 & 70.15 & 818.54 & 65.44 & 795.79 & 73.78 & 797.84 & 91.89 & 826.15 \\
    SA-AMG & 100.94 & 157.90 & 90.26 & 151.09 & 71.63 & 127.67 & 80.22 & 135.98 & 106.78 & 168.56 \\
    NeurKITT & \underline{48.98} & 139.44 & 54.29 & 145.54 & \underline{30.77} & 125.58 & \underline{33.92} & 129.45 & 116.32 & 204.64 \\
    Greenfeld & 277.85 & 808.73 & 209.75 & 749.73 & 160.51 & 717.75 & 147.87 & 676.70 & 254.54 & 781.26 \\
    SOR & 4054.68 & 4056.27 & 3122.29 & 3123.78 & 2211.74 & 2213.22 & 2512.36 & 2513.81 & 4134.43 & 4135.87 \\
    ICC & 2381.40 & 2507.00 & 2479.02 & 2592.75 & 1606.99 & 1725.16 & 1840.45 & 1937.05 & 2356.95 & 2479.86 \\
    SVD & 101.12 & 131.20 & 72.27 & \underline{106.23} & 68.57 & 100.66 & 60.37 & \underline{89.06} & 78.25 & \underline{110.67} \\
    RandomSVD & 79.92 & \underline{118.22} & 80.98 & 121.50 & 58.71 & \underline{97.91} & 54.43 & 91.90 & 82.74 & 116.70 \\
    \textbf{NeuraLSP (Ours)} & \textbf{35.78} & \textbf{99.35} & \textbf{24.41} & \textbf{85.87} & \textbf{28.68} & \textbf{83.89} & \textbf{26.19} & \textbf{74.03} & \textbf{33.18} & \textbf{93.87} \\
    \midrule
    \textit{Improvement} & \textbf{26.9\%} & \textbf{16.0\%} & \textbf{53.5\%} & \textbf{19.2\%} & \textbf{6.8\%} & \textbf{14.3\%} & \textbf{22.8\%} & \textbf{16.9\%} & \textbf{45.9\%} & \textbf{15.2\%} \\
    \bottomrule
  \end{tabular}
  }
\end{table}

Fig \ref{fig:cap_en} reports the energy gap $E_{\text{SVD}}(r) - E_{\text{model}}(r)$, where smaller values indicate closer agreement with the rank-$r$ SVD subspace. NLSS maintains a substantially smaller gap than the invariant subspace-loss baseline under prefix truncation, showing that the nested objective orders the learned basis more effectively. 

\begin{figure}

\includegraphics[width = 0.3\textwidth]{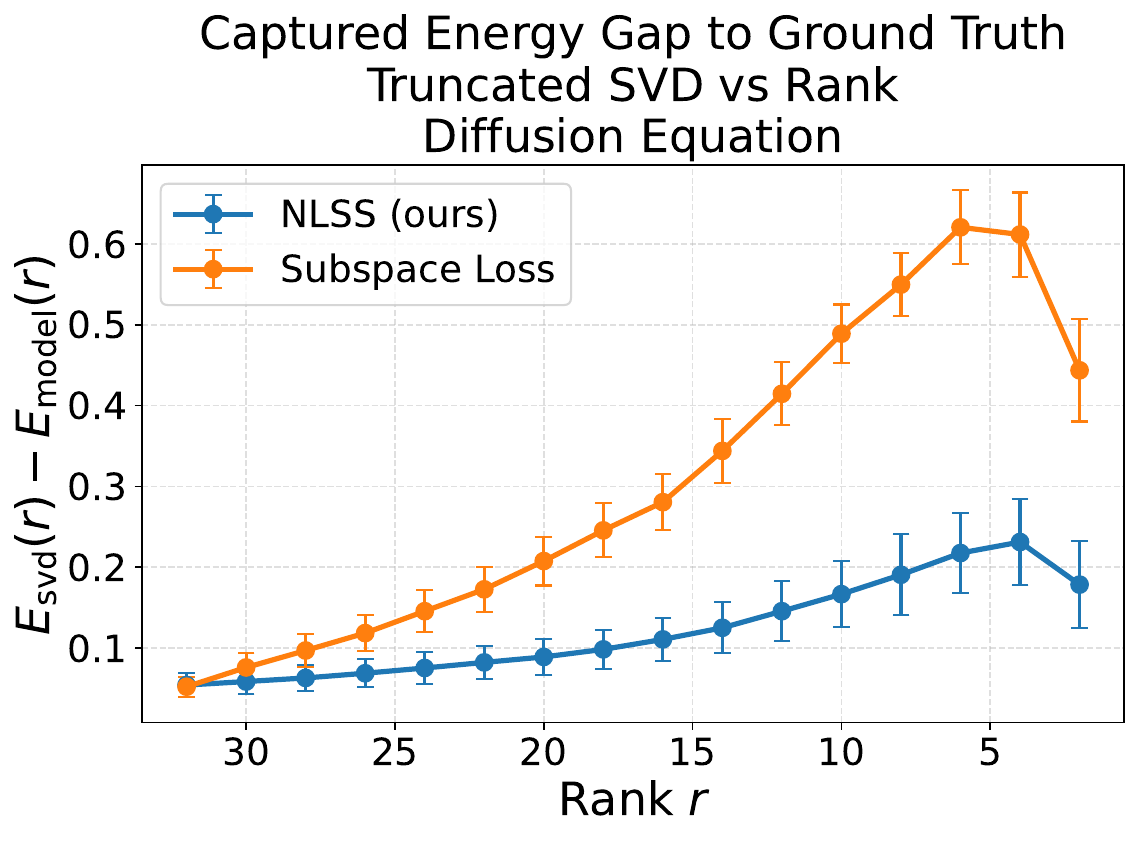}
\includegraphics[width=0.3\textwidth]{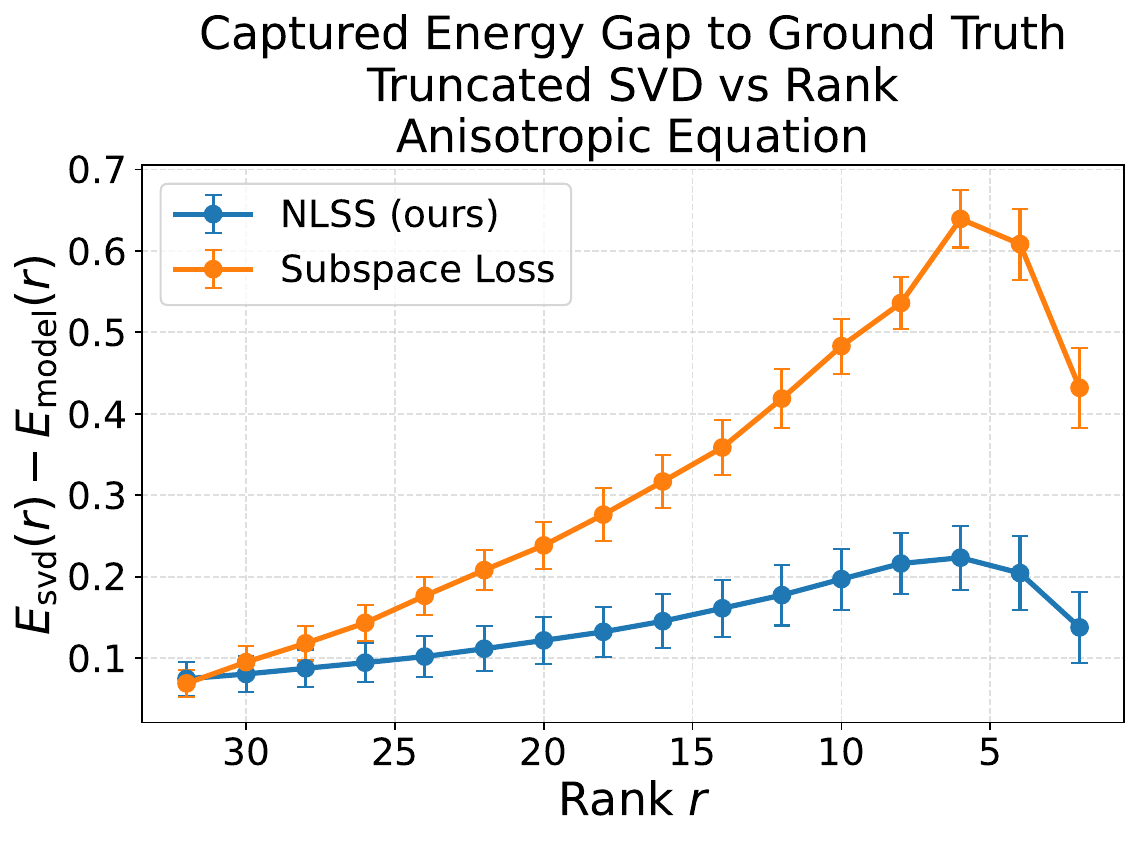}
\includegraphics[width=0.3\textwidth]{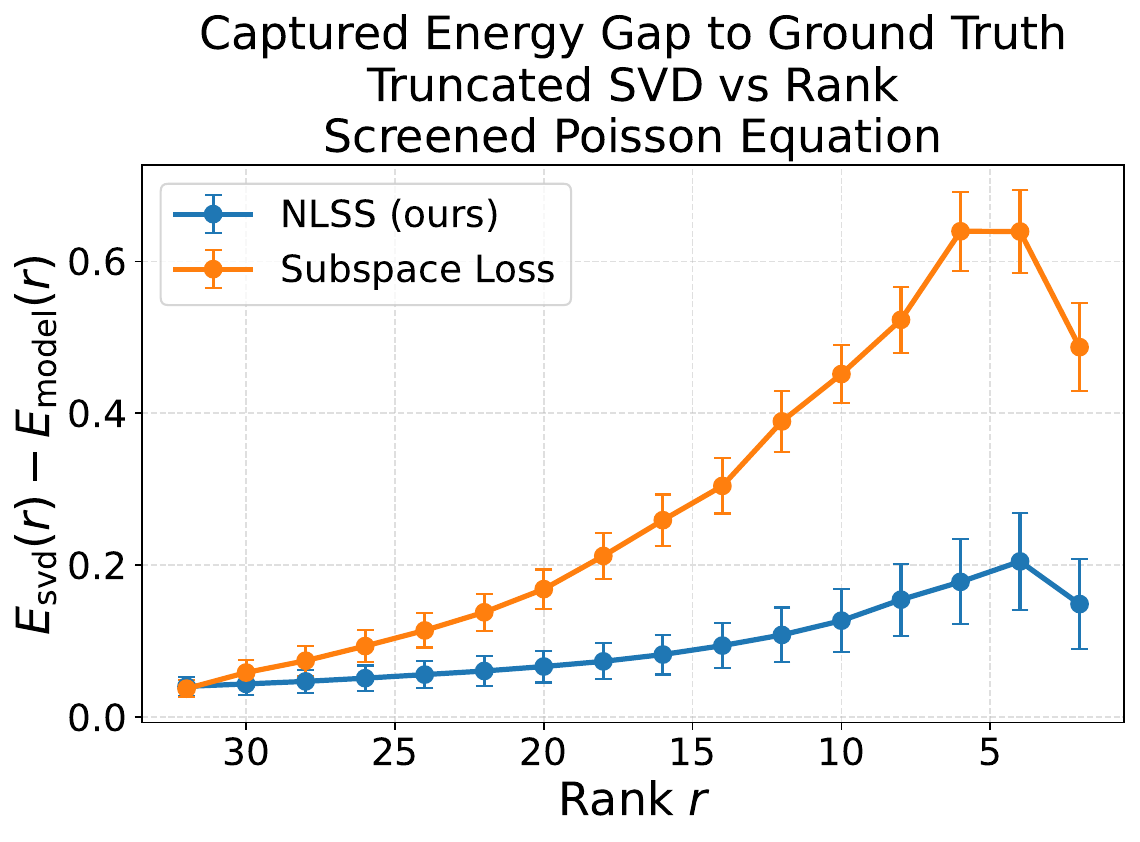}
\caption{$E_{\text{SVD}}(r) - E_{\text{model}}(r)$ for NLSS (blue) vs. subspace loss (orange) in comparison to SVD for diffusion (left), anisotropic (center), and screened Poisson (right) across 100 different samples. Error bars indicate one standard deviation over the test instances.}
\label{fig:cap_en}
\end{figure}

\textbf{Scalability.} \quad 
To see how well NLSS scales in runtime as we increase $N$, we compare NeuraLSP with SA-AMG and classic SVD. For each grid size $N$, we set $K=N$ and $r=K/2$ for both the NLSS and SVD, while SA-AMG automatically computes its own coarse space. (Note: ``End-to-End" time includes the generation of $\mathbf{A}$ given that we are working with different problem sizes in each experiment). 


\begin{table*}[h!]
\caption{
Ablation study comparing NLSS with SVD and SA-AMG for solving the diffusion equation on a triangulated mesh.
All times are in milliseconds. Lower is better for every metric. Bold denotes the best result, and underline denotes the second-best result within each $N$.
}
\label{tab:ablation}
\centering
\footnotesize
\setlength{\tabcolsep}{4pt}
\renewcommand{\arraystretch}{0.92}
\begin{tabular*}{\linewidth}{@{\extracolsep{\fill}}llrrrrr@{}}
\toprule
$N$ & Method & Solve & Setup & Inf. & E2E Avg. & E2E Std. \\
\midrule

16
& SVD
& 2.77
& \sbest{0.42}
& \sbest{0.96}
& \sbest{21.92}
& \best{12.12} \\
& SA-AMG ($n_c=31$)
& \best{1.95}
& 2.73
& 3.65
& 23.30
& 16.36 \\
& \textbf{NeuraLSP (Ours)}
& \sbest{2.75}
& \best{0.40}
& \best{0.88}
& \best{21.81}
& \sbest{12.34} \\

\addlinespace[2pt]
32
& SVD
& 10.89
& \best{0.64}
& \best{2.03}
& \best{78.06}
& 19.54 \\
& SA-AMG ($n_c=124$)
& \best{7.00}
& 2.77
& 4.19
& 78.99
& \best{16.11} \\
& \textbf{NeuraLSP (Ours)}
& \sbest{8.58}
& \sbest{2.47}
& \sbest{3.19}
& \sbest{78.73}
& \sbest{19.34} \\

\addlinespace[2pt]
48
& SVD
& \sbest{30.45}
& \best{1.09}
& \best{5.84}
& \sbest{195.79}
& \best{33.56} \\
& SA-AMG ($n_c=278$)
& 34.94
& 14.29
& \sbest{7.47}
& 206.35
& \sbest{39.98} \\
& \textbf{NeuraLSP (Ours)}
& \best{17.19}
& \sbest{3.50}
& 11.44
& \best{190.54}
& 49.31 \\

\addlinespace[2pt]
64
& SVD
& \sbest{88.82}
& \sbest{2.41}
& \best{13.58}
& \sbest{396.79}
& \sbest{57.09} \\
& SA-AMG ($n_c=491$)
& 113.79
& 46.80
& \sbest{13.85}
& 452.58
& \best{45.82} \\
& \textbf{NeuraLSP (Ours)}
& \best{39.17}
& \best{2.40}
& 15.58
& \best{349.14}
& 93.48 \\

\addlinespace[2pt]
80
& SVD
& \sbest{175.85}
& \best{5.38}
& 31.41
& \sbest{654.31}
& 101.64 \\
& SA-AMG ($n_c=770$)
& 294.80
& 143.65
& \best{23.38}
& 870.58
& \sbest{92.01} \\
& \textbf{NeuraLSP (Ours)}
& \best{73.45}
& \sbest{7.19}
& \sbest{26.16}
& \best{548.47}
& \best{64.77} \\

\bottomrule
\end{tabular*}
\end{table*}

From Table \ref{tab:ablation}, we can see that NLSS scales well in terms of end-to-end performance for solving PDEs as $N$ grows. For larger grids, NeuraLSP becomes increasingly competitive and eventually outperforms both SVD and SA-AMG in solve time and end-to-end runtime.

\textbf{Ablation Study.} \quad
In Table~\ref{tab:coarse_space_ablation}, we present additional results for the elliptic equations we presented, for methods that varied in coarse-space size. In particular, we present results for coarse-space sizes $n_c=36$ and $n_c=64$. We can see that we improve upon other tunable two-grid preconditioners by at least 46\% when we set $n_c=36$ and by about 33\% when $n_c=64$. 


\providecommand{\best}[1]{\textbf{#1}}
\providecommand{\sbest}[1]{\underline{#1}}

\begin{table*}[h!]
  \centering
  \caption{
    Coarse-space ablation on diffusion, anisotropic, and screened Poisson benchmarks.
    All times are reported in milliseconds. Lower is better. Bold denotes the best value
    within each equation and coarse-space size, and underline denotes the second-best value.
    The improvement row reports the percent reduction of \textbf{NeuraLSP (Ours)} relative to
    the second-best method in each column.
  }
  \label{tab:coarse_space_ablation}
  \resizebox{\textwidth}{!}{
  \begin{tabular}{lrrrrrrrrrrrr}
    \toprule
    & \multicolumn{4}{c}{Diffusion} & \multicolumn{4}{c}{Anisotropic} & \multicolumn{4}{c}{Screened Poisson} \\
    \cmidrule(lr){2-5} \cmidrule(lr){6-9} \cmidrule(lr){10-13}
    Method & Solve & Med. & $Q_1$ & $Q_3$ & Solve & Med. & $Q_1$ & $Q_3$ & Solve & Med. & $Q_1$ & $Q_3$ \\
    \midrule
    \multicolumn{13}{@{}l}{\textbf{Coarse-space size $n_c = 36$}} \\
    \addlinespace[2pt]
    Subspace  & \underline{72.11} & 137.55 & 119.77 & 156.53 & \underline{51.49} & 117.85 & 98.77 & 131.38 & \underline{48.87} & 107.49 & 94.07 & 122.44 \\
    GNN       & 89.14 & 826.26 & 806.61 & 858.97 & 59.32 & 802.03 & 774.50 & 824.66 & 58.08 & 783.10 & 767.03 & 818.18 \\
    SVD       & 91.07 & 122.39 & 109.11 & 137.13 & 61.67 & \underline{92.47} & \underline{81.10} & \underline{106.60} & 62.23 & 93.14 & 79.42 & 112.55 \\
    RandomSVD & 72.76 & \underline{104.43} & \underline{93.79} & \underline{123.74} & 74.62 & 110.13 & 94.58 & 131.32 & 53.78 & \underline{86.19} & \underline{75.93} & \underline{96.73} \\
    \textbf{NeuraLSP (Ours)} & \textbf{38.94} & \textbf{100.34} & \textbf{92.05} & \textbf{123.03} & \textbf{26.02} & \textbf{86.17} & \textbf{77.19} & \textbf{100.94} & \textbf{25.77} & \textbf{79.78} & \textbf{71.36} & \textbf{94.03} \\
    \addlinespace[2pt]
    \textit{Improvement} & \textbf{46.00\%} & \textbf{3.92\%} & \textbf{1.86\%} & \textbf{0.57\%} & \textbf{49.47\%} & \textbf{6.81\%} & \textbf{4.82\%} & \textbf{5.31\%} & \textbf{47.27\%} & \textbf{7.44\%} & \textbf{6.02\%} & \textbf{2.79\%} \\
    \midrule
    \multicolumn{13}{@{}l}{\textbf{Coarse-space size $n_c = 64$}} \\
    \addlinespace[2pt]
    Subspace  & \underline{63.23} & \underline{130.31} & \underline{117.10} & \underline{147.10} & \underline{48.75} & \underline{109.36} & \underline{96.87} & 129.73 & \underline{45.51} & \underline{101.96} & 94.76 & \underline{116.71} \\
    GNN       & 115.94 & 855.49 & 835.35 & 878.66 & 81.59 & 820.87 & 794.22 & 857.51 & 80.26 & 809.04 & 785.16 & 850.85 \\
    SVD       & 110.70 & 143.14 & 126.43 & 161.28 & 86.74 & 118.00 & 106.35 & \underline{128.73} & 77.39 & 109.44 & \underline{94.43} & 133.04 \\
    RandomSVD & 94.13 & 133.36 & 119.24 & 163.95 & 97.31 & 141.27 & 124.96 & 162.67 & 70.07 & 112.33 & 100.53 & 122.48 \\
    \textbf{NeuraLSP (Ours)} & \textbf{42.54} & \textbf{106.12} & \textbf{94.41} & \textbf{127.50} & \textbf{29.18} & \textbf{89.72} & \textbf{81.32} & \textbf{104.25} & \textbf{30.69} & \textbf{84.97} & \textbf{78.00} & \textbf{92.90} \\
    \addlinespace[2pt]
    \textit{Improvement} & \textbf{32.72\%} & \textbf{18.56\%} & \textbf{19.38\%} & \textbf{13.32\%} & \textbf{40.14\%} & \textbf{17.96\%} & \textbf{16.05\%} & \textbf{19.02\%} & \textbf{32.56\%} & \textbf{16.66\%} & \textbf{17.40\%} & \textbf{20.40\%} \\
    \bottomrule
  \end{tabular}
  }
\end{table*}

\section{Conclusions and Future Work}
\label{sec:conclusion}
This paper presents \textbf{NeuraLSP}, a novel neural spectral preconditioner framework for accelerating PDE solvers. NeuraLSP manifests a mathematically rigorous NLSS loss to identify the rank-deficient left singular subspace of near-nullspace vectors for a family of discretized PDEs. We have provided a theoretical guarantee of a global minimum for our novel NLSS loss. When combined with our NeuraLSP model, NLSS enables us to avoid partitioning the matrix into aggregates while capturing important global structures. We have shown that NeuraLSP outperforms both classical and neural methods for optimizing AMG in terms of runtime while maintaining comparable convergence behavior across several challenging PDE problems. 

\textbf{Limitations and Future Work.} \quad 
Although our approach is architecturally agnostic, there is no guarantee that other architectures will benefit to the same extent as the one we presented. GNNs, for example, may generalize to different discretization sizes, but lack the computational efficiency of a simple MLP. In future work,  we aim to assess whether this framework can be used as a module within a larger-scale architecture to better precondition iterative schemes and, in turn, reduce the number of iterations.

\nocite{*} 

\bibliographystyle{plain} 
\bibliography{mybibliography}

@Article{Chow06,
  author = 	 "Edmond Chow",
  title = 	 "An Aggregation Multilevel Method for Smooth Error Vectors",
  journal =	 "SIAM Journal on Scientific Computing",
  year =	 "2006",
  volume =	 "27",
  number =	 "5",
  pages =	 "1727--1741"
}

@InProceedings{Ryu24,
  title = "Operator SVD with Neural Networks via Nested Low-Rank Approximation",
  author =       {Ryu, Jongha Jon and Xu, Xiangxiang and Erol, Hasan Sabri Melihcan and Bu, Yuheng and Zheng, Lizhong and Wornell, Gregory W.},
  booktitle = 	 {Proceedings of the 41st International Conference on Machine Learning},
  pages = 	 {42870--42905},
  year = 	 {2024},
  editor = 	 {Salakhutdinov, Ruslan and Kolter, Zico and Heller, Katherine and Weller, Adrian and Oliver, Nuria and Scarlett, Jonathan and Berkenkamp, Felix},
  volume = 	 {235},
  series = 	 {Proceedings of Machine Learning Research},
  month = 	 {21--27 Jul},
  publisher =    {PMLR}
}

@book{Briggs00,
author = {Briggs, William L. and Henson, Van Emden and McCormick, Steve F.},
title = {A Multigrid Tutorial, Second Edition},
publisher = {Society for Industrial and Applied Mathematics},
year = {2000},
address = {},
edition   = {Second}
}

@article{Brandt1977,
  title   = {Multi-level adaptive solutions to boundary-value problems},
  author  = {Brandt, Achi},
  journal = {Mathematics of Computation},
  volume  = {31},
  number  = {138},
  pages   = {333--390},
  year    = {1977},
  publisher = {American Mathematical Society},
}

@book{Trottenberg2001,
  title     = {Multigrid},
  author    = {Trottenberg, Ulrich and Oosterlee, Cornelius W. and Sch{\"u}ller, Anton},
  publisher = {Academic Press},
  address   = {San Diego, CA, USA},
  year      = {2001},
  isbn      = {978-0127010700},
  editor    = {Brandt, Achi and Oswald, Peter and St{\"u}ben, Klaus}
}

@article{Stuben2001,
title = {A review of algebraic multigrid},
journal = {Journal of Computational and Applied Mathematics},
volume = {128},
number = {1},
pages = {281-309},
year = {2001},
note = {Numerical Analysis 2000. Vol. VII: Partial Differential Equations},
issn = {0377-0427},
author = {K. Stüben},
keywords = {Algebraic multigrid}
}

@article{Falgout2006,
  title={An introduction to algebraic multigrid},
  author={Falgout, Robert D.},
  journal={Computing in Science \& Engineering},
  volume={8},
  number={6},
  pages={24--33},
  year={2006},
  publisher={IEEE}
}

@article{Xu2017, title={Algebraic multigrid methods}, volume={26}, journal={Acta Numerica}, author={Xu, Jinchao and Zikatanov, Ludmil}, year={2017}, pages={591–721}}

@article{Vanek1996,
  author = {Van{\v{e}}k, Petr and Mandel, Jan and Brezina, Marian},
  title = {Algebraic multigrid by smoothed aggregation for second and fourth order elliptic problems},
  journal = {Computing},
  year = {1996},
  volume = {56},
  number = {3},
  pages = {179--196}
}

@article{Eckart1936,
  author = {Eckart, Carl and Young, Gale},
  title = {The approximation of one matrix by another of lower rank},
  journal = {Psychometrika},
  year = {1936},
  volume = {1},
  number = {3},
  pages = {211--218}
}

@book{Golub2013,
  author    = {Golub, Gene H. and Van Loan, Charles F.},
  title     = {Matrix Computations},
  edition   = {4th},
  year      = {2013},
  publisher = {Johns Hopkins University Press},
  address   = {Baltimore, MD, USA},
  isbn      = {9781421407944}
}

@article{Halko2011,
  author  = {Halko, Nathan and Martinsson, Per-Gunnar and Tropp, Joel A.},
  title   = {Finding Structure with Randomness: Probabilistic Algorithms for Constructing Approximate Matrix Decompositions},
  journal = {SIAM Review},
  volume  = {53},
  number  = {2},
  pages   = {217--288},
  year    = {2011}
}

@InProceedings{Greenfeld19,
  title = 	 {Learning to Optimize Multigrid {PDE} Solvers},
  author =       {Greenfeld, Daniel and Galun, Meirav and Basri, Ronen and Yavneh, Irad and Kimmel, Ron},
  booktitle = 	 {Proceedings of the 36th International Conference on Machine Learning},
  pages = 	 {2415--2423},
  year = 	 {2019},
  editor = 	 {Chaudhuri, Kamalika and Salakhutdinov, Ruslan},
  volume = 	 {97},
  series = 	 {Proceedings of Machine Learning Research},
  month = 	 {09--15 Jun},
  publisher =    {PMLR}
}

@inproceedings{Luz20,
  title     = {Learning Algebraic Multigrid Using Graph Neural Networks},
  author    = {Luz, Ilay and Galun, Meirav and Maron, Haggai and Basri, Ronen and Yavneh, Irad},
  booktitle = {Proceedings of the 37th International Conference on Machine Learning},
  series    = {Proceedings of Machine Learning Research},
  volume    = {119},
  pages     = {6489--6499},
  year      = {2020},
  publisher = {PMLR}
}

@article{Hendrycks16,
  title   = {Gaussian Error Linear Units (GELUs)},
  author  = {Hendrycks, Dan and Gimpel, Kevin},
  journal = {arXiv preprint arXiv:1606.08415},
  year    = {2016}
}

@book{Thomas1995,
  author    = {Thomas, James William},
  title     = {Numerical Partial Differential Equations: Finite Difference Methods},
  series    = {Texts in Applied Mathematics},
  volume    = {22},
  publisher = {Springer-Verlag},
  address   = {New York},
  year      = {1995},
  isbn      = {978-0-387-97999-1}
}

@book{Leveque1992,
  title     = {Numerical Methods for Conservation Laws},
  author    = {LeVeque, Randall J.},
  year      = {1992},
  edition   = {2nd},
  publisher = {Birkhäuser Basel},
  address   = {Basel},
  isbn      = {978-3-7643-2723-1}
}

@book{BrennerScott2008,
  author = {Brenner, Susanne C. and Scott, L. Ridgway},
  title = {The Mathematical Theory of Finite Element Methods},
  edition = {Third},
  year = {2008},
  publisher = {Springer},
  address = {New York},
  series = {Texts in Applied Mathematics},
  volume = {15},
  isbn = {978-0387759333}
}

@incollection{RugeStuben87,
  title={Algebraic multigrid (AMG)},
  author={Ruge, John W and St{\"u}ben, Klaus},
  booktitle={Multigrid Methods},
  pages={73--130},
  year={1987},
  publisher={SIAM},
  address={Philadelphia, PA},
  editor={McCormick, S. F.},
  series={Frontiers in Applied Mathematics},
  volume={3}
}

@article{Hestenes52,
  title={Methods of Conjugate Gradients for Solving Linear Systems},
  author={Hestenes, Magnus R. and Stiefel, Eduard},
  journal={Journal of Research of the National Bureau of Standards},
  volume={49},
  number={6},
  pages={409--436},
  year={1952},
  publisher={National Bureau of Standards}
}

@article{Brandt11,
author = {Brandt, A. and Brannick, J. and Kahl, K. and Livshits, I.},
title = {Bootstrap AMG},
journal = {SIAM Journal on Scientific Computing},
volume = {33},
number = {2},
pages = {612-632},
year = {2011}
}

@article{Hu21,
  title={LoRA: Low-Rank Adaptation of Large Language Models},
  author={Hu, Edward J and Shen, Yelong and Wallis, Phillip and Allen-Zhu, Zeyuan and Li, Yuanzhi and Wang, Shean and Chen, Weizhu},
  journal={International Conference on Learning Representations},
  year={2022}
}

@article{Bell2022, year = {2022}, publisher = {The Open Journal}, volume = {7}, number = {72}, pages = {4142}, author = {Bell, Nathan and Olson, Luke N. and Schroder, Jacob}, title = {PyAMG: Algebraic Multigrid Solvers in Python}, journal = {Journal of Open Source Software} }

@article{Huang2005,
title = {Metric tensors for anisotropic mesh generation},
journal = {Journal of Computational Physics},
volume = {204},
number = {2},
pages = {633-665},
year = {2005},
issn = {0021-9991},
author = {Weizhang Huang}
}

@article{Castro1997,
  title = {Anisotropic unstructured mesh adaption for flow simulations},
  author = {Castro-D{\'\i}az, M. J. and Hecht, F. and Mohammadi, B. and Pironneau, O.},
  journal = {International Journal for Numerical Methods in Fluids},
  volume = {25},
  number = {4},
  pages = {475--491},
  year = {1997}
}

@article{Habashi2000,
author = {Habashi, Wagdi G. and Dompierre, Julien and Bourgault, Yves and Ait-Ali-Yahia, Djaffar and Fortin, Michel and Vallet, Marie-Gabrielle},
title = {Anisotropic mesh adaptation: towards user-independent, mesh-independent and solver-independent CFD. Part I: general principles},
journal = {International Journal for Numerical Methods in Fluids},
volume = {32},
year={2000},
number = {6},
pages = {725-744}
}

@article{Remacle2005,
  title={Anisotropic adaptive simulation of transient flows using discontinuous Galerkin methods},
  author={Remacle, Jean-Fran{\c{c}}ois and Li, Xiangrong and Shephard, Mark S and Flaherty, Joseph E},
  journal={International Journal for Numerical Methods in Engineering},
  volume={62},
  number={7},
  pages={899--923},
  year={2005},
  publisher={Wiley Online Library}
}

@article{Frank2001,
author = {Frank, J. and Vuik, C.},
title = {On the Construction of Deflation-Based Preconditioners},
journal = {SIAM Journal on Scientific Computing},
volume = {23},
number = {2},
pages = {442-462},
year = {2001}
}

@book{von1937,
  title={Some matrix-inequalities and metrization of matric space},
  author={Von Neumann, John},
  year={1937}
}

@article{Fan1949,
  title={On a theorem of Weyl concerning eigenvalues of linear transformations I},
  author={Fan, Ky},
  journal={Proceedings of the National Academy of Sciences},
  volume={35},
  number={11},
  pages={652--655},
  year={1949},
  publisher={National Academy of Sciences}
}

@article{caldana24-compmathapps-deep-amg-fem,
title = {A deep learning algorithm to accelerate algebraic multigrid methods in finite element solvers of 3D elliptic PDEs},
journal = {Computers \& Mathematics with Applications},
volume = {167},
pages = {217--231},
year = {2024},
issn = {0898-1221},
doi = {https://doi.org/10.1016/j.camwa.2024.05.013},
author = {Matteo Caldana and Paola F. Antonietti and Luca Dede'},
}

@inproceedings{taghibakhshi21-nips-amg-rl,
author = {Taghibakhshi, Ali and MacLachlan, Scott and Olson, Luke and West, Matthew},
title = {Optimization-based algebraic multigrid coarsening using reinforcement learning},
year = {2021},
booktitle = {Proceedings of the 35th International Conference on Neural Information Processing Systems},
}

@inproceedings{
Li2024,
title={Amortized Eigendecomposition for Neural Networks},
author={Tianbo Li and Zekun Shi and Jiaxi Zhao and Min Lin},
booktitle={The Thirty-eighth Annual Conference on Neural Information Processing Systems},
year={2024}
}

@inproceedings{Wang2019,
 author = {Wang, Wei and Dang, Zheng and Hu, Yinlin and Fua, Pascal and Salzmann, Mathieu},
 booktitle = {Advances in Neural Information Processing Systems},
 editor = {H. Wallach and H. Larochelle and A. Beygelzimer and F. d\textquotesingle Alch\'{e}-Buc and E. Fox and R. Garnett},
 pages = {},
 publisher = {Curran Associates, Inc.},
 title = {Backpropagation-Friendly Eigendecomposition},
 volume = {32},
 year = {2019}
}

@inproceedings{
Miyato2018,
title={Spectral Normalization for Generative Adversarial Networks},
author={Takeru Miyato and Toshiki Kataoka and Masanori Koyama and Yuichi Yoshida},
booktitle={International Conference on Learning Representations},
year={2018},
}

@inproceedings{Levinson2020,
 author = {Levinson, Jake and Esteves, Carlos and Chen, Kefan and Snavely, Noah and Kanazawa, Angjoo and Rostamizadeh, Afshin and Makadia, Ameesh},
 booktitle = {Advances in Neural Information Processing Systems},
 editor = {H. Larochelle and M. Ranzato and R. Hadsell and M.F. Balcan and H. Lin},
 pages = {22554--22565},
 publisher = {Curran Associates, Inc.},
 title = {An Analysis of SVD for Deep Rotation Estimation},
 volume = {33},
 year = {2020}
}

@inproceedings{
Pfau2018,
title={Spectral Inference Networks: Unifying Deep and Spectral Learning},
author={David Pfau and Stig Petersen and Ashish Agarwal and David G. T. Barrett and Kimberly L. Stachenfeld},
booktitle={International Conference on Learning Representations},
year={2019},
}

@InProceedings{Deng2022,
  title = 	 {{N}eural{EF}: Deconstructing Kernels by Deep Neural Networks},
  author =       {Deng, Zhijie and Shi, Jiaxin and Zhu, Jun},
  booktitle = 	 {Proceedings of the 39th International Conference on Machine Learning},
  pages = 	 {4976--4992},
  year = 	 {2022},
  editor = 	 {Chaudhuri, Kamalika and Jegelka, Stefanie and Song, Le and Szepesvari, Csaba and Niu, Gang and Sabato, Sivan},
  volume = 	 {162},
  series = 	 {Proceedings of Machine Learning Research},
  month = 	 {17--23 Jul},
  publisher =    {PMLR}
}

@inproceedings{Shaham2018,
author = {Uri Shaham and Kelly Stanton and Henri Li and Boaz Nadler and Ronen Basri and Yuval Kluger},
title = {SpectralNet: Spectral Clustering Using Deep Neural Networks},
booktitle = {Proc. ICLR 2018},
year = {2018}
}

@inproceedings{
Gemp2021,
title={EigenGame: {\{}PCA{\}} as a Nash Equilibrium},
author={Ian Gemp and Brian McWilliams and Claire Vernade and Thore Graepel},
booktitle={International Conference on Learning Representations},
year={2021},
}

@inproceedings{Bao2020,
 author = {Bao, Xuchan and Lucas, James and Sachdeva, Sushant and Grosse, Roger B},
 booktitle = {Advances in Neural Information Processing Systems},
 editor = {H. Larochelle and M. Ranzato and R. Hadsell and M.F. Balcan and H. Lin},
 pages = {6971--6981},
 publisher = {Curran Associates, Inc.},
 title = {Regularized linear autoencoders recover the principal components, eventually},
 volume = {33},
 year = {2020}
}

@InProceedings{Oftadeh2020,
  title = 	 {Eliminating the Invariance on the Loss Landscape of Linear Autoencoders},
  author =       {Oftadeh, Reza and Shen, Jiayi and Wang, Zhangyang and Shell, Dylan},
  booktitle = 	 {Proceedings of the 37th International Conference on Machine Learning},
  pages = 	 {7405--7413},
  year = 	 {2020},
  editor = 	 {III, Hal Daumé and Singh, Aarti},
  volume = 	 {119},
  series = 	 {Proceedings of Machine Learning Research},
  month = 	 {13--18 Jul},
  publisher =    {PMLR}
}

@article{Whyte2025,
    author = "Whyte, Travis and Stathopoulos, Andreas and Romero, Eloy",
    title = "{Chiral rank-$k$ truncations for the multigrid preconditioner of Wilson fermions in lattice QCD}",
    eprint = "2502.03091",
    primaryClass = "hep-lat",
    journal = "PoS",
    volume = "LATTICE2024",
    pages = "060",
    year = "2025"
}

@article{Vanek1999,
author = {Vanek, Petr and Brezina, Marian and Tezaur, Radek},
title = {Two-grid Method for Linear Elasticity on Unstructured Meshes},
journal = {SIAM Journal on Scientific Computing},
volume = {21},
number = {3},
pages = {900-923},
year = {1999}
}

@article{Griebel2003,
author = {Griebel, Michael and Oeltz, Daniel and Schweitzer, Marc Alexander},
title = {An Algebraic Multigrid Method for Linear Elasticity},
journal = {SIAM Journal on Scientific Computing},
volume = {25},
number = {2},
pages = {385-407},
year = {2003}
}

@article{Knoll2000,
title = {On Newton–Krylov Multigrid Methods for the Incompressible Navier–Stokes Equations},
journal = {Journal of Computational Physics},
volume = {163},
number = {1},
pages = {262-267},
year = {2000},
issn = {0021-9991},
author = {D.A. Knoll and V.A. Mousseau}
}

@article{Pernice2001,
author = {Pernice, M. and Tocci, M. D.},
title = {A Multigrid-Preconditioned Newton--Krylov Method for the Incompressible Navier--Stokes Equations},
journal = {SIAM Journal on Scientific Computing},
volume = {23},
number = {2},
pages = {398-418},
year = {2001}
}

@article{Elman2003,
title = {A parallel block multi-level preconditioner for the 3D incompressible Navier–Stokes equations},
journal = {Journal of Computational Physics},
volume = {187},
number = {2},
pages = {504-523},
year = {2003},
issn = {0021-9991},
author = {Howard C Elman and Victoria E Howle and John N Shadid and Ray S Tuminaro}
}

@article{Reitzinger2002,
title = "An algebraic multigrid method for finite element discretizations with edge elements",
author = "Stefan Reitzinger and Joachim Sch{\"o}berl",
year = "2002",
month = apr,
language = "English",
volume = "9",
pages = "223 -- 238",
journal = "Numerical Linear Algebra with Applications",
issn = "1099-1506",
publisher = "Wiley InterScience, John Wiley \& Sons, Ltd.",
number = "3",
}

@article{Bochev2003, author = {Bochev, Pavel B. and Garasi, Christopher J. and Hu, Jonathan J. and Robinson, Allen C. and Tuminaro, Raymond S.}, title = {An Improved Algebraic Multigrid Method for Solving Maxwell's Equations}, year = {2003}, issue_date = {2003}, publisher = {Society for Industrial and Applied Mathematics}, address = {USA}, volume = {25}, number = {2}, issn = {1064-8275}, url = {https://doi.org/10.1137/S1064827502407706}, doi = {10.1137/S1064827502407706},journal = {SIAM J. Sci. Comput.}, month = jan, pages = {623–642}, numpages = {20}}

@article{Hiptmair2007,
author = {Hiptmair, Ralf and Xu, Jinchao},
title = {Nodal Auxiliary Space Preconditioning in H(curl) and H(div) Spaces},
journal = {SIAM Journal on Numerical Analysis},
volume = {45},
number = {6},
pages = {2483-2509},
year = {2007}
}

@inproceedings{Luo24,
 author = {Luo, Jian and Wang, Jie and Wang, Hong and Dong, Huanshuo and Geng, Zijie and Chen, Hanzhu and Kuang, Yufei},
 booktitle = {Advances in Neural Information Processing Systems},
 doi = {10.52202/079017-4086},
 editor = {A. Globerson and L. Mackey and D. Belgrave and A. Fan and U. Paquet and J. Tomczak and C. Zhang},
 pages = {128636--128667},
 publisher = {Curran Associates, Inc.},
 title = {Neural Krylov Iteration for Accelerating Linear System Solving},
 volume = {37},
 year = {2024}
}

\medskip


\appendix
\newpage
\section{Appendix}

For the theoretical and experimental completeness of this paper, in this Appendix, we prove the Theorems in the main content, and also list our implementation details.

\subsection{von Neumann Trace Inequality}
For any $\mathbf{A},\mathbf{B}\in\mathbb{C}^{n\times n}$ with singular values $\alpha_i$ for $\mathbf{A}$ and $\beta_i$ for $\mathbf{B}$ sorted decreasingly, we have:
\[
|\text{Tr}(\mathbf{AB})|\leq \sum_{i=1}^n\alpha_i\beta_i.
\]

\subsection{Proof of Theorem 1}
\label{sec:theorem_proof}

\begin{proof}

First, we define $\mathcal{J}^{(k)}(\mathbf{S},\tilde{\mathbf{P}}) = \sum_{\ell=1}^k||\tilde{\mathbf{P}}_\ell^T\mathbf{S}||_F^2$. Then, minimizing $\mathcal{L}^{(k)}$ becomes equivalent to maximizing $\mathcal{J}^{(k)}$. Next, using the properties of the Frobenius norm, we have:
\[
||\tilde{\mathbf{P}}_\ell^T\mathbf{S}||_F^2 = \text{Tr}(\tilde{\mathbf{P}}_\ell^T\mathbf{SS}^T\tilde{\mathbf{P}}_\ell) = \sum_{j=1}^\ell \tilde{\mathbf{p}}_j^T\left(\mathbf{SS}^T\right)\mathbf{p}_j,
\]
\[
\Rightarrow \mathcal{J}^{(k)} = \sum_{\ell=1}^k\left(\sum_{j=1}^\ell \tilde{\mathbf{p}}_j^T(\mathbf{SS}^T)\tilde{\mathbf{p}}_j\right).
\]
Now, observe that $\tilde{\mathbf{p}}_1$ appears $k$ times, $\tilde{\mathbf{p}}_2$ appears $k-1$ times, etc. So, if we let $w_j \equiv k-j+1$, we can rewrite our loss as:
\[
\mathcal{J}^{(k)} = \sum_{j=1}^k w_j \tilde{\mathbf{p}}_j^T(\mathbf{SS}^T)\tilde{\mathbf{p}}_j.
\]
Now, let $\mathbf{A}\equiv \mathbf{SS}^T$, then, our objective becomes:
\[
\mathcal{J}^{(k)} = \sum_{j=1}^k w_j \tilde{\mathbf{p}}_j^T \mathbf{A}\tilde{\mathbf{p}}_j.
\]
Now, we let $\mathbf{W} := \text{diag}(w_1,\cdots, w_k)$. Then, we can again rewrite $\mathcal{J}^{(k)}$:
\[
\mathcal{J}^{(k)} = \text{Tr}(\mathbf{A}\tilde{\mathbf{P}}\mathbf{W}\tilde{\mathbf{P}}^T).
\]
We can now extend the matrix $\tilde{\mathbf{P}}$ as $\mathbf{Q} = [\tilde{\mathbf{P}}, \tilde{\mathbf{P}}_{\perp}]$ where $\mathbf{Q}$ is now an orthogonal matrix. Finally, we define one last element, $\mathbf{B}$:
\[
\mathbf{B}:= \mathbf{Q} \begin{bmatrix}
\mathbf{W} & 0\\
0 & 0
\end{bmatrix} \mathbf{Q}^T = \tilde{\mathbf{P}}\mathbf{W} \tilde{\mathbf{P}}^T,
\]
\[
\Rightarrow \mathcal{J}^{(k)} = \text{Tr}(\mathbf{AB}).
\]
Now, by the von Neumann trace inequality for positive semidefinite matrices \cite{von1937}, since the $w_i$'s are the eigenvalues of $\mathbf{B}$, we have:
\[
\mathcal{J}^{(k)} \leq \sum_{i=1}^k w_i \lambda_i(\mathbf{A}).
\]
Finally, if $\mathbf{B}$ shares eigenvectors with $\mathbf{A}$ (i.e., when $\mathbf{Q}=\mathbf{U}$, which is the eigenbasis of $\mathbf{A}$), then this bound becomes tight. In our case, the first $k$ columns of $\mathbf{Q}$ are exactly the top eigenvectors of $\mathbf{A}$, which correspond to the top $k$ left singular vectors of $\mathbf{S}$. Thus, if we choose $\tilde{\mathbf{P}} = \mathbf{U_k}$, we attain the upper bound and, in turn, maximize $\mathcal{J}^{(k)}$. Moreover, if $\mathbf{P}^*$ is any global maximizer of $\mathcal{J}^{(k)}$, then we still have $\mathcal{J}^{(k)} = \sum_{i=1}^k w_i\lambda_i(\mathbf{A})$, i.e., $\text{span}(\mathbf{P}^*)$ lies in the top-$k$ eigenspace of $\mathbf{SS}^T$.

\end{proof}

\subsection{Proof of Corollary 2}
\label{sec:proof_corollary}

\begin{proof}
Let $\mathbf{S} = \mathbf{U\Sigma V}^T$ and, like in the proof for Theorem~\ref{thm:nlss_subspace}, we can define $\mathbf{A}\equiv \mathbf{SS}^T =\mathbf{U}\mathbf{\Sigma}^2\mathbf{U}^T$. Let $\mathbf{P}\in \text{St}(n,k)$ and $\mathbf{W} = \text{diag}(w_1,\cdots,w_k)$ such that $w_j = k-j+1$. Following the steps of the proof for Theorem~\ref{thm:nlss_subspace}, we end up with:
\[
\text{Tr}(\mathbf{AB}) \leq \sum_{i=1}^k \lambda_i(\mathbf{A})w_i,
\]
where $\lambda_1(\mathbf{A})\geq \lambda_2(\mathbf{A})\geq \cdots$ are the eigenvalues of $\mathbf{A}$. This equality will hold if and only if $\mathbf{A}$ and $\mathbf{B}$ are simultaneously diagonalizable with aligned eigenvectors (up to rotations inside eigenspaces corresponding to repeated eigenvalues). Now, we assume that $\sigma_1>\sigma_2>\cdots>\sigma_k>\sigma_{k+1}$ where $k<m$. Then, since $\lambda_i(\mathbf{A}) = \sigma_i^2$, we also have $\lambda_1(\mathbf{A})> \lambda_k(\mathbf{A}) > \lambda_{k+1}(\mathbf{A})$. Thus, the eigenvectors $\mathbf{u_1},\cdots, \mathbf{u_k}$ of $\mathbf{A}$ are uniquely determined up to a sign. So, for the equality case to hold, $\mathbf{B}$ must share the eigenvectors with $\mathbf{A}$. That is, we must have the eigendecomposition:
\[
\mathbf{B} = \sum_{i=1}^k w_i\mathbf{u_i}\mathbf{u_i}^T.
\]
However, we also know that each column $\mathbf{p_j}$ of $\mathbf{P}$ is an eigenvector of $\mathbf{B}$ with eigenvalue $w_j$ and because $w_j$ is a simple eigenvalue of $\mathbf{B}$, its eigenvector is unique up to a sign. Therefore, $\mathbf{p_j}$ must equal $\pm \mathbf{u_j}$ for each $j=1,\cdots, k$, i.e.:
\[
\mathbf{P}^* = \mathbf{U_{k}D},
\]
for a diagonal sign matrix $\mathbf{D}$.

\end{proof}

\subsection{Proof of Optimality of Smooth Error Reduction}

\begin{theorem}
Let $\mathbf{S}\in\mathbb{R}^{n\times m}$ be the collection of smooth error vectors. Let $\mathcal{P}_k$ be the set of all orthogonal projection matrices of rank $k$. Then, the projector $\mathbf{PP}^T$ constructed from the global minimizer $\mathbf{P}^*$ of the NLSS loss captures the maximum possible Frobenius energy. 
\end{theorem}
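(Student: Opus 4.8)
The plan is to reduce the statement to a Ky~Fan / Eckart--Young extremal principle and then read off the minimizing subspace from Theorem~4.1. The first step is to pin down the object being minimized. In the idealized, smoothing-free analysis of Section~\ref{sec:method:loss} --- consistently with the least-squares characterization $\min_{\mathbf{P},\mathbf{W}}\|\mathbf{S}-\mathbf{P}\mathbf{W}\|$ and the captured-energy metric of Eq.~\eqref{captured_energy} --- the coarse-grid correction replaces the block of smooth modes $\mathbf{S}$ by its Galerkin (orthogonal) representative in $\text{Range}(\mathbf{P})$; for orthonormal columns this representative is $\mathbf{P}\mathbf{W}^\star$ with $\mathbf{W}^\star = \arg\min_{\mathbf{W}}\|\mathbf{S}-\mathbf{P}\mathbf{W}\|_F = \mathbf{P}^T\mathbf{S}$, so the \emph{residual error after coarse-grid correction} is $\mathbf{S} - \mathbf{P}\mathbf{P}^T\mathbf{S} = (\mathbf{I}-\mathbf{P}\mathbf{P}^T)\mathbf{S}$. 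Since every $\Pi\in\mathcal{P}_k$ can be written $\Pi=\mathbf{P}\mathbf{P}^T$ with $\mathbf{P}\in\text{St}(n,k)$, and $\|(\mathbf{I}-\Pi)\mathbf{S}\|_F$ depends only on $\text{Range}(\Pi)$, the goal is to minimize $\|(\mathbf{I}-\Pi)\mathbf{S}\|_F$ over $\Pi\in\mathcal{P}_k$.

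The second step is the reduction to a trace maximization. Because $\Pi^T=\Pi=\Pi^2$, the Frobenius cross term $\text{Tr}\big(\mathbf{S}^T(\Pi-\Pi^2)\mathbf{S}\big)$ vanishes, giving the Pythagorean identity $\|(\mathbf{I}-\Pi)\mathbf{S}\|_F^2 = \|\mathbf{S}\|_F^2 - \|\Pi\mathbf{S}\|_F^2$, and by cyclicity $\|\Pi\mathbf{S}\|_F^2 = \text{Tr}(\mathbf{P}^T\mathbf{S}\mathbf{S}^T\mathbf{P})$, which is exactly the numerator of the captured energy in Eq.~\eqref{captured_energy}. Minimizing the residual is therefore equivalent to maximizing $\text{Tr}(\mathbf{P}^T\mathbf{S}\mathbf{S}^T\mathbf{P})$ over $\mathbf{P}\in\text{St}(n,k)$. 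By Ky~Fan's maximum principle \cite{Fan1949} applied to the symmetric positive semidefinite matrix $\mathbf{S}\mathbf{S}^T$, this maximum equals $\sum_{i=1}^k\lambda_i(\mathbf{S}\mathbf{S}^T) = \sum_{i=1}^k\sigma_i^2$ and is attained precisely when $\text{Range}(\mathbf{P})$ is a leading $k$-dimensional invariant subspace of $\mathbf{S}\mathbf{S}^T$, equivalently the leading $k$-dimensional left singular subspace of $\mathbf{S}$; equivalently again, by Eckart--Young \cite{Eckart1936}, $\Pi\mathbf{S}=\mathbf{U}_k\mathbf{\Sigma}_k\mathbf{V}_k^T$ is then the best rank-$k$ approximation of $\mathbf{S}$ and $(\mathbf{I}-\Pi)\mathbf{S}$ is its residual, of squared Frobenius norm $\sum_{i>k}\sigma_i^2$.

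The final step invokes Theorem~4.1: the columns of any global minimizer $\mathbf{P}^*$ of $\mathcal{L}^{(k)}$ span the leading $k$-dimensional left singular subspace of $\mathbf{S}$, i.e. $\mathbf{P}^* = \mathbf{U}_k\mathbf{Q}$ for some orthogonal $\mathbf{Q}\in\mathbb{R}^{k\times k}$, whence $\mathbf{P}^*(\mathbf{P}^*)^T = \mathbf{U}_k\mathbf{U}_k^T$ is the orthogonal projector onto that subspace (independent of the orthogonal ambiguity, so well defined). By the second step this projector attains $\min_{\Pi\in\mathcal{P}_k}\|(\mathbf{I}-\Pi)\mathbf{S}\|_F$, which is the claim; concretely, the post-correction smooth-error energy equals the optimal Eckart--Young residual $\sum_{i>k}\sigma_i^2$, so no rank-$k$ coarse space can do better.

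The extremal computation is short; the real care is in the modeling, and that is where I expect the main obstacle. A faithful variational Galerkin two-grid operator is $\mathbf{I}-\mathbf{P}(\mathbf{P}^T\mathbf{A}\mathbf{P})^{-1}\mathbf{P}^T\mathbf{A}$ --- the $\mathbf{A}$-orthogonal, not Euclidean, projector onto the $\mathbf{A}$-complement of $\text{Range}(\mathbf{P})$ --- and measuring its action on $\mathbf{S}$ in the $\mathbf{A}$-norm would single out a generalized ($\mathbf{A}$-weighted) singular subspace rather than the plain SVD of $\mathbf{S}$. The theorem as stated is thus to be read with the residual measured through the Euclidean projector onto $\text{Range}(\mathbf{P})$, i.e. with the coarse correction taken as the orthonormal-basis (least-squares) projection of the smooth modes --- precisely the quantity the NLSS loss optimizes and that the captured-energy metric tracks --- and making this identification explicit and justified is the delicate part of the argument. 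A secondary caveat: ``minimizes'' should be read as ``attains the minimum,'' since the optimal $k$-dimensional subspace, and hence the optimal projector, is unique only under the strict spectral gap $\sigma_k>\sigma_{k+1}$ of Corollary~4.2, in which case the minimizer equals $\mathbf{U}_k\mathbf{U}_k^T$ exactly.
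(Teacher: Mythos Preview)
Your proof is correct and follows essentially the same route as the paper: identify the residual after (Euclidean) Galerkin correction as $(\mathbf{I}-\mathbf{P}\mathbf{P}^T)\mathbf{S}$, reduce minimizing its Frobenius norm to maximizing $\text{Tr}(\mathbf{P}^T\mathbf{S}\mathbf{S}^T\mathbf{P})$ over $\text{St}(n,k)$, apply Ky~Fan, and then invoke Theorem~4.1 to place $\mathbf{P}^*$ on the optimal subspace. Your added discussion --- the distinction between the Euclidean projector and the true $\mathbf{A}$-orthogonal Galerkin operator, and the uniqueness caveat under $\sigma_k>\sigma_{k+1}$ --- goes beyond what the paper's proof makes explicit and is a fair reading of what the theorem can actually claim.
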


\begin{proof}
Let $\mathbf{R}$ be the residual matrix after projecting the smooth vectors $\mathbf{S}$ onto the subspace spanned by $\mathbf{P}$:
\[
\mathbf{R} = \mathbf{S} - \mathbf{PP}^T\mathbf{S} = (\mathbf{I} - \mathbf{PP}^T)\mathbf{S}.
\]
We seek to minimize the total energy of the residual, defined by the squared Frobenius norm:
\[
\min_{\mathbf{P}\in\text{St}(n,k)}||(\mathbf{I}-\mathbf{PP}^T)\mathbf{S}||_F^2.
\]
Using the property that $||\mathbf{A}||_F^2 = \text{Tr}(\mathbf{A}^T\mathbf{A})$, we have:
\[
||\mathbf{R}||_F^2 = \text{Tr}(\mathbf{S}^T(\mathbf{I}-2\mathbf{PP}^T+\mathbf{PP}^T\mathbf{PP}^T)\mathbf{S}).
\]
Since $\mathbf{P}^T\mathbf{P} = \mathbf{I}$, we have:
\[
= \text{Tr}(\mathbf{S}^T\mathbf{S}) - \text{Tr}(\mathbf{S}^T\mathbf{PP}^T\mathbf{S}).
\]
Note that $\text{Tr}(\mathbf{S}^T\mathbf{S}) = ||\mathbf{S}||_F^2$ is constant, and by the cyclic property of trace, our problem becomes:
\[
\arg\min_\mathbf{P} ||\mathbf{R}||_F^2 \Leftrightarrow \arg\max_\mathbf{P} \text{Tr}(\mathbf{P}^T\mathbf{SS}^T\mathbf{P}).
\]
Now, we know that the NLSS loss provides us with $\mathbf{P}^*$ such that $\text{span}(\mathbf{P}^*) = \text{span}(\mathbf{U_k})$ where $\mathbf{U_k}$ are the top $k$ eigenvectors of $\mathbf{SS}^T$. According to Ky Fan's maximum principle, the function $f(\mathbf{P}) = \text{Tr}(\mathbf{P}^T\mathbf{AP})$ for a symmetric matrix $\mathbf{A}$, is maximized over the Stiefel manifold when the columns of $\mathbf{P}$ span the invariant subspace corresponding to the $k$ largest eigenvalues of $\mathbf{A}$. Therefore, $\mathbf{P}^*$ maximizes the captured energy $\text{Tr}(\mathbf{P}^T\mathbf{SS}^T\mathbf{P})$. Thus, $\mathbf{P}^*$ strictly minimizes the projection error $||\mathbf{S} - \mathbf{PP}^T\mathbf{S}||_F^2$. 

\end{proof}

\subsection{Galerkin Convergence}

\begin{definition}
The Galerkin projector is given as $$\mathbf{\Pi_A}(\mathbf{P}) = \mathbf{P}(\mathbf{P}^T\mathbf{A}\mathbf{P})^{-1}\mathbf{P}^T\mathbf{A}.$$
\end{definition}
\begin{definition}
The \emph{total post-correction smooth-energy error} is given as:
$$J(\mathbf{P}) := ||\mathbf{A}^{\frac{1}{2}}(\mathbf{I} - \mathbf{\Pi_A}(\mathbf{P}))\mathbf{S}||_F^2.$$
\end{definition}
\begin{theorem}
Let $\mathbf{A}$ be SPD and let $\mathbf{S} = [\mathbf{s}_1,\cdots,\mathbf{s}_m]\in\mathbb{R}^{n\times m}$ be sampled smooth error vectors. For full-rank $\mathbf{P}\in\mathbb{R}^{n\times k}$, among rank-$k$ coarse spaces, $J(\mathbf{P})$ is minimized when the range of $\mathbf{A}^{\frac{1}{2}}\mathbf{P}$ equals the top-$k$ left singular subspace of $\mathbf{A}^{\frac{1}{2}}\mathbf{S}$.
\end{theorem}
\begin{proof}
Let $\mathbf{Z}:=\mathbf{A}^{\frac{1}{2}}\mathbf{S}$ and $\mathbf{Q}:=\mathbf{A}^{\frac{1}{2}}\mathbf{P}$. Then, we have:
$$\mathbf{\Pi_A}(\mathbf{P}) = \mathbf{A}^{-\frac{1}{2}}\mathbf{Q}(\mathbf{Q}^T\mathbf{Q})^{-1}\mathbf{Q}^T\mathbf{A}^{\frac{1}{2}}$$ $$:= \mathbf{A}^{-\frac{1}{2}}\mathbf{\Pi}(\mathbf{Q})\mathbf{A}^{\frac{1}{2}}.$$ 
Then, we can rewrite the objective of $J(\mathbf{P})$ as: $$J(\mathbf{P}) = ||(\mathbf{I} - \mathbf{\Pi(\mathbf{Q}))\mathbf{Z}}||_F^2.$$
Now, we know that for orthogonal projector $\mathbf{\Pi(\mathbf{Q})}$, we have: 
$$||(\mathbf{I} - \mathbf{\Pi(\mathbf{Q}))\mathbf{Z}}||_F^2 = ||\mathbf{Z}||_F^2 - ||\mathbf{\Pi(Q)Z}||_F^2.$$
Thus, minimizing $\mathbf{J}$ is equivalent to maximizing $||\mathbf{\Pi(Q)Z}||_F^2$. Similar to the justification in A.2, we know that the maximizing $k$-dimensional subspace is the span of the top-$k$ left singular vectors of $\mathbf{Z} = \mathbf{A}^{\frac{1}{2}}\mathbf{S}$. 
\end{proof}
It is important to note that we do not sample the left singular vectors from $\mathbf{A}^{\frac{1}{2}}\mathbf{S}$ in our work. 

\subsection{Additional Experimental Details}
\label{app:experimental_details}

\paragraph{Mesh Generation.}
For each instance, we begin with an $(N+1)\times(N+1)$ grid on
$\Omega=[0,1]^2$. Boundary nodes are fixed, while interior nodes are randomly
jittered subject to rejecting degenerate meshes. We then apply Delaunay
triangulation and retain triangulated meshes for the finite-element
discretization. We train on 1000 samples and evaluate on 100 held-out samples.

\paragraph{NLSS Architecture and Optimization.}
For the PDE-solving experiments, NLSS uses a four-layer MLP with hidden widths
128, 256, 256, and 128, GELU activations~\cite{Hendrycks16}, and LayerNorm
between hidden layers. The output dimension is $nr$, corresponding to the
entries of $\mathbf{P}_r\in\mathbb{R}^{n\times r}$. We train for 1000 epochs
using Adam with learning rate $10^{-3}$ and PyTorch's default He initialization. 

\paragraph{Subspace-Ordering Diagnostic Architecture.}
For the smaller captured-energy diagnostic, we use systems with $N=9$ and
$K=32$ smoothed vectors. Both NLSS and the invariant subspace-loss baseline are
trained to full rank. We use a smaller MLP with one hidden layer of width 128 and
GELU activation.

\paragraph{GNN Baseline.}
The GNN baseline is trained using the loss of \cite{Luz20}. We choose its width
so that it has approximately the same number of trainable parameters as the NLSS MLP. The GNN uses learning rate $10^{-4}$, while the MLP uses learning rate
$10^{-3}$.

\paragraph{Hardware.}
All models are trained and evaluated on the same Apple M1 CPU environment. The peak resident set size observed during training was 1.94GB.

\paragraph{Implementation Details.}
NLSS is architecture-agnostic; in our experiments, we use a global MLP to reduce
inference overhead. The network outputs the entries of
$\mathbf{P}_r\in\mathbb{R}^{n\times r}$ and is trained with column-permutation
augmentation of the smoothed vectors so that the learned representation does not
depend on the arbitrary ordering of the test vectors. Training time did not exceed 20 minutes for any of the PDE families. For PyAMG, we use version 5.3 and are in agreement with the MIT License. 

\subsection{FEM Dataset Generation}
\label{app:fem_dataset_generation}

\paragraph{Common Mesh and Finite-Element Setup.}
All PDE benchmarks are posed on the unit square
\[
    \Omega = [0,1]^2
\]
with homogeneous Dirichlet boundary conditions. For a grid parameter $N$, we
begin with an $(N+1)\times (N+1)$ point cloud with spacing $h=1/N$, giving
$n=(N+1)^2$ vertices. The initial grid vertices are
\[
    \mathbf{p}_{ij}
    =
    \left(\frac{i}{N},\frac{j}{N}\right),
    \qquad i,j=0,\ldots,N.
\]
For the unstructured meshes, boundary vertices are fixed and each interior
vertex $\mathbf{p}_i$ is randomly perturbed:
\[
    \widetilde{\mathbf{p}}_i
    =
    \mathbf{p}_i + \boldsymbol{\xi}_i,
    \qquad
    \boldsymbol{\xi}_i
    \sim
    \mathrm{Unif}([-\epsilon_i,\epsilon_i]^2),
\]
where
\[
    \epsilon_i = \min\{0.35h,\;0.49d_i\},
\]
and $d_i$ is the distance from $\mathbf{p}_i$ to the boundary. We then apply
Delaunay triangulation to the jittered point cloud and remove triangles with
near-zero signed area.

On each triangle $T\in\mathcal{T}_h$, we use linear $P_1$ finite elements with
local basis functions $\{\phi_1,\phi_2,\phi_3\}$. Since the basis functions are
affine on each triangle, their gradients are constant on $T$. We denote these
gradient vectors by
\[
    \mathbf{g}_i^{(T)}
    =
    \nabla \phi_i|_T
    \in \mathbb{R}^2.
\]
For a scalar diffusion coefficient $\kappa_T$, the local stiffness matrix is
\[
    \left(\mathbf{K}^{(T)}\right)_{ij}
    =
    |T|\,\kappa_T
    \left(\mathbf{g}_i^{(T)}\right)^\top
    \mathbf{g}_j^{(T)}.
\]
For an anisotropic tensor coefficient
$\boldsymbol{\mathsf{K}}_T\in\mathbb{R}^{2\times 2}$, we instead use
\[
    \left(\mathbf{K}^{(T)}\right)_{ij}
    =
    |T|
    \left(\mathbf{g}_i^{(T)}\right)^\top
    \boldsymbol{\mathsf{K}}_T
    \mathbf{g}_j^{(T)}.
\]
The local mass matrix is
\[
    \mathbf{M}^{(T)}
    =
    \frac{|T|}{12}
    \begin{bmatrix}
    2 & 1 & 1 \\
    1 & 2 & 1 \\
    1 & 1 & 2
    \end{bmatrix}.
\]
The global stiffness matrix $\mathbf{K}$ and mass matrix $\mathbf{M}$ are
assembled by summing local contributions over all triangles. Homogeneous
Dirichlet boundary conditions are imposed by zeroing the rows and columns of the
global matrix corresponding to boundary vertices and setting the corresponding
diagonal entries to one.

\subsubsection{Diffusion Equation}

The diffusion benchmark is
\begin{equation}
    \label{eq:diffusion}
    \left\{
        \begin{aligned}
            -\nabla \cdot \left(\kappa(\mathbf{x})\nabla u(\mathbf{x})\right)
            &= f(\mathbf{x}),
            \qquad \mathbf{x}\in\Omega, \\
            u(\mathbf{x}) &= 0,
            \qquad \mathbf{x}\in\partial\Omega .
        \end{aligned}
    \right.
\end{equation}
The weak form is to find $u\in H_0^1(\Omega)$ such that, for all
$v\in H_0^1(\Omega)$,
\[
    a(u,v)
    =
    \int_\Omega
    \kappa \nabla u\cdot \nabla v\,d\mathbf{x}
    =
    \int_\Omega
    f v\,d\mathbf{x}.
\]
We take $\kappa$ to be piecewise constant on each triangle. For each triangle
$T$, we sample
\[
    \kappa_T = \exp(\eta_T),
    \qquad
    \eta_T \sim \mathcal{N}(0,1.5^2).
\]
The resulting matrix is the assembled stiffness matrix
\[
    \mathbf{A}_{\mathrm{diff}}
    =
    \mathbf{K}(\kappa).
\]

\subsubsection{Anisotropic Equation}

The anisotropic benchmark is
\begin{equation}
    \label{eq:anisotropic}
    \left\{
        \begin{aligned}
            -\nabla\cdot
            \left(
            \boldsymbol{\mathsf{K}}(\mathbf{x})
            \nabla u(\mathbf{x})
            \right)
            &= f(\mathbf{x}),
            \qquad \mathbf{x}\in\Omega, \\
            u(\mathbf{x}) &= 0,
            \qquad \mathbf{x}\in\partial\Omega .
        \end{aligned}
    \right.
\end{equation}
Here
$\boldsymbol{\mathsf{K}}(\mathbf{x})\in\mathbb{R}^{2\times 2}$ is a symmetric
positive definite tensor. On each triangle $T$, we sample an independent angle
\[
    \theta_T \sim \mathrm{Unif}(0,\pi),
\]
and define the rotation matrix
\[
    \mathbf{R}_T
    =
    \begin{bmatrix}
    \cos\theta_T & -\sin\theta_T \\
    \sin\theta_T & \cos\theta_T
    \end{bmatrix}.
\]
The local anisotropic tensor is then
\[
    \boldsymbol{\mathsf{K}}_T
    =
    \mathbf{R}_T
    \begin{bmatrix}
    10^3 & 0 \\
    0 & 1
    \end{bmatrix}
    \mathbf{R}_T^\top .
\]
Thus each local tensor has principal diffusivities differing by a factor of
$10^3$, with a randomly oriented principal direction. The resulting matrix is
\[
    \mathbf{A}_{\mathrm{aniso}}
    =
    \mathbf{K}(\boldsymbol{\mathsf{K}}).
\]

\subsubsection{Screened Poisson Equation}

The screened Poisson benchmark is
\begin{equation}
    \label{eq:screened_poisson}
    \left\{
        \begin{aligned}
            -\nabla \cdot
            \left(\kappa(\mathbf{x})\nabla u(\mathbf{x})\right)
            + \alpha u(\mathbf{x})
            &= f(\mathbf{x}),
            \qquad \mathbf{x}\in\Omega, \\
            u(\mathbf{x}) &= 0,
            \qquad \mathbf{x}\in\partial\Omega .
        \end{aligned}
    \right.
\end{equation}
The weak form is to find $u\in H_0^1(\Omega)$ such that, for all
$v\in H_0^1(\Omega)$,
\[
    a(u,v)
    =
    \int_\Omega
    \kappa \nabla u\cdot\nabla v\,d\mathbf{x}
    +
    \alpha
    \int_\Omega
    u v\,d\mathbf{x}.
\]
We sample a piecewise constant diffusion coefficient
\[
    \kappa_T = \exp(\eta_T),
    \qquad
    \eta_T\sim\mathcal{N}(0,1),
\]
and a scalar reaction coefficient
\[
    \alpha = 10^\zeta,
    \qquad
    \zeta\sim\mathrm{Unif}(0,2).
\]
Thus $\alpha\in[1,100]$. The resulting matrix is
\[
    \mathbf{A}_{\mathrm{sp}}
    =
    \mathbf{K}(\kappa) + \alpha \mathbf{M}.
\]
Since $\kappa_T>0$ and $\alpha>0$, the resulting operator is symmetric positive
definite after applying Dirichlet boundary conditions.

\subsubsection{Heat Equation}

The heat benchmark corresponds to one implicit time step for a heterogeneous
diffusion equation,
\[
    u_t - \nabla\cdot\left(\kappa(\mathbf{x})\nabla u(\mathbf{x})\right)
    =
    f(\mathbf{x}).
\]
Using backward Euler in time and linear finite elements in space gives
\[
    \left(\mathbf{M}+\Delta t\,\mathbf{K}(\kappa)\right)
    \mathbf{u}^{m+1}
    =
    \mathbf{M}\mathbf{u}^{m}
    +
    \Delta t\,\mathbf{f}^{m+1}.
\]
In our benchmark, we use only the left-hand-side matrix
\[
    \mathbf{A}_{\mathrm{heat}}
    =
    \mathbf{M}+\Delta t\,\mathbf{K}(\kappa).
\]
The diffusion coefficient is sampled independently per triangle as
\[
    \kappa_T = \exp(\eta_T),
    \qquad
    \eta_T\sim\mathcal{N}(0,1),
\]
and the timestep is sampled log-uniformly:
\[
    \Delta t = 10^\tau,
    \qquad
    \tau\sim\mathrm{Unif}[-2,0].
\]
Therefore $\Delta t\in[10^{-2},1]$. This produces a family of symmetric positive
definite matrices whose relative mass and stiffness contributions vary across
samples.

\subsubsection{Wave Equation}

The wave benchmark corresponds to an implicit step for a heterogeneous
wave-type equation,
\[
    u_{tt}
    -
    c^2
    \nabla\cdot
    \left(\kappa(\mathbf{x})\nabla u(\mathbf{x})\right)
    =
    f(\mathbf{x}).
\]
The resulting left-hand-side matrix has the form
\[
    \mathbf{A}_{\mathrm{wave}}
    =
    \mathbf{M}
    +
    (c\Delta t)^2 \mathbf{K}(\kappa).
\]
As in the heat benchmark, $\mathbf{M}$ is the finite-element mass matrix and
$\mathbf{K}(\kappa)$ is the heterogeneous stiffness matrix. For the wave
benchmark, the diffusion coefficient is sampled with larger contrast:
\[
    \kappa_T = \exp(\eta_T),
    \qquad
    \eta_T\sim\mathcal{N}(0,1.5^2).
\]
The wave speed and timestep are sampled as
\[
    c = 10^\gamma,
    \qquad
    \gamma\sim\mathrm{Unif}[2,3],
\]
and
\[
    \Delta t\sim\mathrm{Unif}[h,2h],
    \qquad
    h=\frac{1}{N}.
\]
Thus, the stiffness scaling factor is $(c\Delta t)^2$. Since
$c\in[10^2,10^3]$, the stiffness term often dominates the mass term, producing
matrices with strongly elliptic behavior while still arising from an implicit
wave discretization.

\subsection{Quartiles of Runtimes for PDE Solving Experiments}

In Table~\ref{tab:pde_runtime_quartiles}, we present the quartiles of the runtimes for the experiments shown in the PDE Solving Efficiency section of~\ref{sec:quant_results}. 

\begin{table}[h!]
  \centering
  \caption{Extended runtime comparison: First (Q1) and Third (Q3) Quartiles in milliseconds (ms). The best performing method is \textbf{bolded}, and the second-best is \underline{underlined}.}
  \label{tab:pde_runtime_quartiles}
  \resizebox{\textwidth}{!}{
  \begin{tabular}{lrrrrrrrrrr}
    \toprule
    & \multicolumn{2}{c}{Diffusion} & \multicolumn{2}{c}{Anisotropic} & \multicolumn{2}{c}{Screened Poisson} & \multicolumn{2}{c}{Heat} & \multicolumn{2}{c}{Wave} \\
    \cmidrule(lr){2-3} \cmidrule(lr){4-5} \cmidrule(lr){6-7} \cmidrule(lr){8-9} \cmidrule(lr){10-11}
    Method & Q1 & Q3 & Q1 & Q3 & Q1 & Q3 & Q1 & Q3 & Q1 & Q3 \\
    \midrule
    Subspace & 121.15 & 166.87 & 99.34 & 143.96 & 92.55 & 121.82 & 82.90 & \underline{93.34} & \underline{104.51} & \underline{122.73} \\
    GNN & 816.77 & 865.94 & 790.12 & 844.76 & 774.08 & 820.39 & 787.79 & 814.78 & 806.41 & 844.89 \\
    SA-AMG & 146.94 & 174.08 & 138.76 & 180.21 & 117.97 & 155.77 & 125.56 & 170.70 & 146.76 & 198.03 \\
    NeurKITT & 127.42 & 166.68 & 135.58 & 167.42 & 111.82 & 158.38 & 115.04 & 149.47 & 172.69 & 245.37 \\
    Greenfeld & 779.01 & 852.28 & 718.76 & 789.34 & 683.31 & 765.56 & 655.57 & 697.16 & 755.68 & 824.85 \\
    SOR & 4017.27 & 4120.11 & 3058.68 & 3177.11 & 2085.94 & 2729.32 & 2196.82 & 2719.94 & 4095.89 & 4189.04 \\
    ICC & 2399.68 & 2592.23 & 2537.39 & 2675.52 & 1655.26 & 2145.76 & 1766.50 & 2038.35 & 2431.88 & 2517.00 \\
    SVD & 120.93 & 149.92 & \underline{91.08} & \underline{123.64} & \underline{84.25} & 117.44 & \underline{76.70} & 109.88 & 108.75 & 133.53 \\
    RandomSVD & \underline{106.51} & \underline{143.24} & 104.68 & 141.07 & 88.95 & \underline{106.09} & 87.11 & 94.51 & 109.65 & 127.00 \\
    \textbf{NeuraLSP (Ours)} & \textbf{88.55} & \textbf{112.79} & \textbf{76.07} & \textbf{103.95} & \textbf{73.22} & \textbf{97.41} & \textbf{70.80} & \textbf{79.68} & \textbf{78.88} & \textbf{104.43} \\
    \midrule
    \textit{Improvement} & \textbf{16.9\%} & \textbf{21.3\%} & \textbf{16.5\%} & \textbf{15.9\%} & \textbf{13.1\%} & \textbf{8.2\%} & \textbf{7.7\%} & \textbf{14.6\%} & \textbf{24.5\%} & \textbf{14.9\%} \\
    \bottomrule
  \end{tabular}
  }
\end{table}

\subsection{Iteration Counts of NeuraLSP vs. SVD}

Although standard SVD is optimal for Euclidean low-rank approximation of the sampled smoothed-vector matrix S, this objective does not directly optimize the convergence of the full smoothed two-grid preconditioner used inside PCG. To examine this distinction, Table~\ref{tab:iter_counts} reports median PCG iteration counts for NLSS and the standard SVD coarse basis at coarse dimension $n_c=48$. Both methods use the same smoothed-vector generation procedure, rank, Galerkin restriction $R=P^\top$, weighted Jacobi smoother, and stopping tolerance. NLSS requires fewer PCG iterations across all PDE families, suggesting that the learned basis yields a more effective full two-grid preconditioner even when standard SVD has stronger sample-wise reconstruction properties. A theoretical characterization of this phenomenon is left for future work. Iteration counts are medians over 100 test instances; all runs reached relative residual $10^{-6}$ without hitting the maximum iteration cap.

\begin{table}[h!]
\centering
\caption{Median PCG iterations for NLSS and standard SVD at $n_c=48$. Lower is better. Both methods use the same two-grid preconditioner, smoother, rank, and PCG tolerance $\delta = 10^{-6}$.}
\begin{tabular}{lcc}
\toprule
\textbf{PDE} & \textbf{NLSS Iterations} & \textbf{SVD Iterations} \\
\midrule
Diffusion & 27 & 64 \\
Anisotropic  & 21 & 55 \\
Screened Poisson  & 19 & 46 \\
Heat  & 17.5 & 45 \\
Wave  & 49.5 & 64 \\
\bottomrule
\label{tab:iter_counts}
\end{tabular}

\label{tab:pde_iterations}
\end{table}

\subsection{Two-Level Subspace Correction}

\label{sec:method:twogrid}

Given a learned orthonormal basis $\mathbf{P}_\theta\in\mathbb{R}^{n\times k}$, NeuraLSP defines the Galerkin coarse operator
\[
    \label{eq:coarse_operator}
    \mathbf{A}_c
    =
    \mathbf{P}_\theta^{\top}
    \mathbf{A}
    \mathbf{P}_\theta.
\]
For SPD $\mathbf{A}$ and full-rank $\mathbf{P}_\theta$, the matrix $\mathbf{A}_c$ is also SPD. The corresponding coarse-grid correction operator is
\[
    \label{eq:coarse_correction}
    \mathbf{C}_\theta
    =
    \mathbf{P}_\theta
    \mathbf{A}_c^{-1}
    \mathbf{P}_\theta^{\top}.
\]
Ignoring smoothing for the moment, the error propagation matrix for the coarse correction is
\[
    \label{eq:error_propagation}
    \mathbf{E}_c
    =
    \mathbf{I}
    -
    \mathbf{C}_\theta
    \mathbf{A}.
\]
If the current error lies exactly in the learned coarse space, i.e.,
$\mathbf{e}=\mathbf{P}_\theta\mathbf{y}$ for some $\mathbf{y}$, then
\[
    \mathbf{E}_c\mathbf{e}
    =
    \mathbf{P}_\theta\mathbf{y}
    -
    \mathbf{P}_\theta
    \left(
        \mathbf{P}_\theta^{\top}
        \mathbf{A}
        \mathbf{P}_\theta
    \right)^{-1}
    \mathbf{P}_\theta^{\top}
    \mathbf{A}
    \mathbf{P}_\theta
    \mathbf{y}
    =
    \mathbf{0}.
\]
Thus, Galerkin correction exactly eliminates error components contained in
$\operatorname{Range}(\mathbf{P}_\theta)$. More generally, the coarse correction computes the best approximation to the error from
$\operatorname{Range}(\mathbf{P}_\theta)$ in the $\mathbf{A}$-norm:
\[
    \label{eq:a_norm_projection}
    \left\|
        \mathbf{E}_c\mathbf{e}
    \right\|_{\mathbf{A}}
    =
    \min_{\mathbf{y}\in\mathbb{R}^k}
    \left\|
        \mathbf{e}
        -
        \mathbf{P}_\theta\mathbf{y}
    \right\|_{\mathbf{A}}.
\]
Therefore, the quality of the preconditioner depends on how well the learned coarse space captures the smooth error components that remain after relaxation.

The role of NLSS is to construct such a space from the smoothed sample matrix $\mathbf{S}$. Since the columns of $\mathbf{S}$ are generated by applying relaxation to random test vectors, they are biased toward slow-to-converge error modes. The NLSS objective then selects a rank-$k$ subspace that captures the dominant components of these samples. This links the neural training objective to the coarse-correction mechanism used by multigrid.


\newpage

\end{document}